\def\@ACM@checkaffil{
    \if@ACM@instpresent\else
    \ClassWarningNoLine{\@classname}{No institution present for an affiliation}%
    \fi
    \if@ACM@citypresent\else
    \ClassWarningNoLine{\@classname}{No city present for an affiliation}%
    \fi
    \if@ACM@countrypresent\else
        \ClassWarningNoLine{\@classname}{No country present for an affiliation}%
    \fi
}
\definecolor{mygray}{gray}{.9}
\definecolor{LightCyan}{rgb}{0.88,1,1}
\newcommand{\nop}[1]{}
\newcommand{\model}{{MP-KD}\xspace }
  \providecommand\BibTeX{{%
    \normalfont B\kern-0.5em{\scshape i\kern-0.25em b}\kern-0.8em\TeX}}}
\begin{document}
\title{Mutually-paced Knowledge Distillation for Cross-lingual Temporal Knowledge Graph Reasoning}

\author{Ruijie Wang, Zheng Li$^{2\dagger}$, Jingfeng Yang$^{2}$, Tianyu Cao$^{2}$, Chao Zhang$^{3}$, \\Bing Yin$^{2}$ and Tarek Abdelzaher$^{1\dagger}$}
\thanks{$^{\dagger}$Corresponding authors in UIUC and Amazon.com Inc}
\affiliation{%
  \institution{$^{1}$Department of Computer Science, University of Illinois Urbana-Champaign,~$^{2}$Amazon.com Inc}
  \institution{$^{3}$School of Computational Science and Engineering, Georgia Institute of Technology}
}

\email{{ruijiew2, zaher}@illinois.edu,\space chaozhang@gatech.edu}
\email{{amzzhe, jingfe, caoty, alexbyin}@amazon.com}

\begin{CCSXML}
<ccs2012>
   <concept>
       <concept_id>10010147.10010178.10010187.10010193</concept_id>
       <concept_desc>Computing methodologies~Temporal reasoning</concept_desc>
       <concept_significance>500</concept_significance>
       </concept>
 </ccs2012>
\end{CCSXML}

\ccsdesc[500]{Computing methodologies~Temporal reasoning}

\keywords{Temporal Knowledge Graph, Cross-lingual Transfer, Knowledge Distillation, Self-training}

\renewcommand{\shortauthors}{Wang, et al.}

%

\newcommand{\fix}{\marginpar{FIX}}
\newcommand{\new}{\marginpar{NEW}}

\begin{abstract}
This paper investigates cross-lingual temporal knowledge graph reasoning problem, which aims to facilitate reasoning on Temporal Knowledge Graphs (TKGs) in low-resource languages by transfering knowledge from TKGs in high-resource ones. The cross-lingual distillation ability across TKGs becomes increasingly crucial, in light of the unsatisfying performance of existing reasoning methods on those severely incomplete TKGs, especially in low-resource languages. However, it poses tremendous challenges in two aspects. First, the cross-lingual alignments, which serve as bridges for knowledge transfer, are usually too scarce to transfer sufficient knowledge between two TKGs. Second, temporal knowledge discrepancy of the aligned entities, especially when alignments are unreliable, can mislead the knowledge distillation process. We correspondingly propose a mutually-paced knowledge distillation model \model, where a teacher network trained on a source TKG can guide the training of a student network on target TKGs with an alignment module. Concretely, to deal with the scarcity issue, \model generates pseudo alignments between TKGs based on the temporal information extracted by our representation module. To maximize the efficacy of knowledge transfer and control the noise caused by the temporal knowledge discrepancy, we enhance \model with a temporal cross-lingual attention mechanism to dynamically estimate the alignment strength. The two procedures are mutually paced along with model training. Extensive experiments on twelve cross-lingual TKG transfer tasks in the EventKG benchmark demonstrate the effectiveness of the proposed \model method.

\end{abstract}
\maketitle
\section{introduction}


Temporal Knowledge Graphs (TKGs)~\cite{YAGO,ICEWS18,WIKI,acekg} characterize temporally evolving events, where each event, represented as ({\em subject}, {\em relation}, {\em object}), is associated with temporal information ({\em time}), e.g., ({\em Macron}, {\em reelected}, {\em French president}, {\em 2022}). TKGs has facilitated various knowledge-intensive Web applications with timeliness, such as question answering~\cite{KBQA}, product recommendation~\cite{RippleNet,TKG4Rec,TKG4Rec2,RETE}, and social event forecasting~\cite{KG4Social,DyDiff-VAE,andgan,belief,misinfo,polarization}. 

As new events are continually emerging, modern TKGs are still far from being complete. Conventionally, the TKG construction process relies primarily on information extraction from unstructured corpus~\cite{WIKI,YAGO, EventKG}, which necessitates extensive manual annotations to keep up with changing events. For instance, the recent transition from Trump to Biden as the President of the United States has not been reflected in many TKGs, highlighting the need for timely updates. This spurs research on temporal knowledge graph reasoning to automate evolving events prediction over time~\cite{TA-DistMult,Know-Evolve,Renet,RE-GCN}. Unfortunately, the problem of TKG incompleteness is particularly pronounced in low-resource languages, where it is unable to collect enough corpus and annotations to support robust TKG construction. This results in suboptimal reasoning performance and distinctly unsatisfying accuracy in predicting recent and future events.


\begin{figure}
    \centering
    \includegraphics[width = 1.0\linewidth]{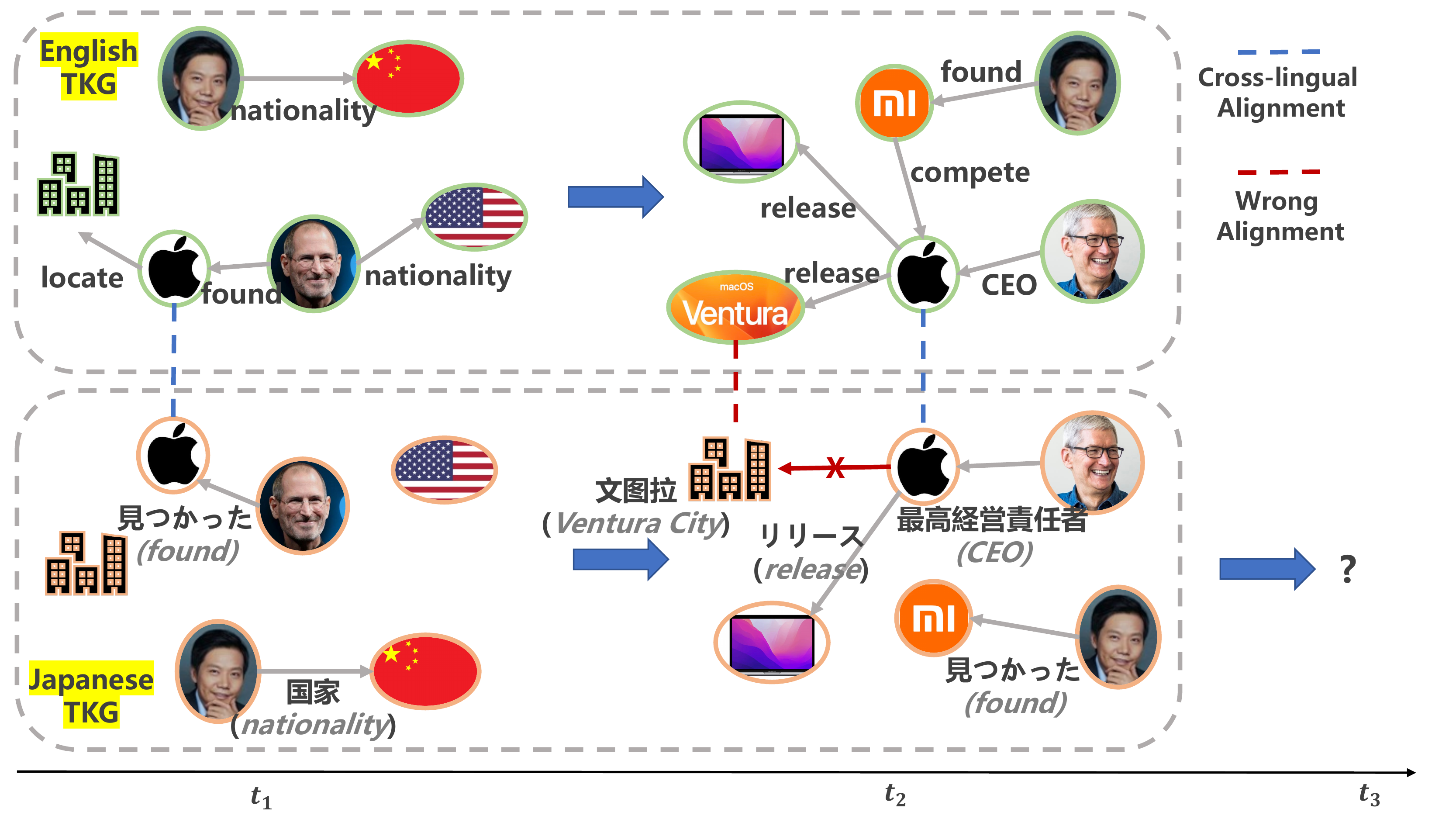}
    \caption{An illustrative example of cross-lingual reasoning on TKGs. 1) We aim to transfer knowledge from English TKG to Japanese TKG, where the English version provides more complete information; 2) Cross-lingual alignments only cover a small ratio of entities, e.g., Apple Inc; 3) Cross-lingual alignments can be noisy and misleading, e.g., A city called Ventura is linked to new macOS Ventura at $t_2$, introducing noise for reasoning in Japanese.}
    \label{fig:illustration}
\end{figure}

Inspired by the incompleteness issue facing low-resource languages in constructing TKGs, we introduce a novel task named Cross-Lingual Temporal Knowledge Graph Reasoning (as shown in Figure~\ref{fig:illustration}). This task aims to alleviate the reliance on supervision for TKGs in low-resource languages (referred to as the target language) by transferring temporal knowledge from high-resource languages (referred to as the source language)~\footnote{In this paper, for the sake of brevity, we interchangeably use the terms high-resource/low-resource and source/target.}. In contrast, all the existing efforts are either limited to reasoning in monolingual TKGs (usually high-resource languages, e.g., English)~\cite{TA-DistMult,Know-Evolve,Renet,RE-GCN}, or multilingual static KGs~\cite{KEnS,AlignKGC,SS-AGA}. To the best of our knowledge, cross-lingual TKG reasoning that transfers temporal knowledge between TKGs has not been investigated. 


The fulfillment of this task poses tremendous challenges in two aspects: 1) \textbf{Scarcity of cross-lingual alignment}: as the informative bridge of two separate TKGs, cross-lingual alignment is imperative for cross-lingual knowledge transfer~\cite{AlignKGC,KEnS,SS-AGA}. However, obtaining alignments between languages is a time-consuming and resource-intensive process that heavily relies on human annotations. The transfer of knowledge through a limited number of alignments is often insufficient to fully enhance the TKG in the target language. 2) \textbf{Temporal knowledge discrepancy}: the information associated with two aligned entities is not necessarily identical, especially with regards to temporal patterns. Utilizing a rough approach to equate the aligned entities at all times can result in the transfer of misleading knowledge and negatively impact performance. This becomes more pronounced when the alignments are noisy and unreliable. For example, at the time step $t_2$, a new event about operating system ``{\it Ventura}'' from Apple company occurs in the source English TKG, and meanwhile there is a noisy aligned entity ``{\it Ventura city}'' in the target Japanese TKG. Directly pulling those two entities at this point, can inevitably introduce  noise and fail to predict a set of related events in the target TKG. Therefore, it is crucial to dynamically regulate the alignment strength of each local graph structure over time in order to maximize the effectiveness of cross-lingual knowledge distillation.

In this paper, we propose a novel Mutually-paced Knowledge Distillation (\model) framework, where a teacher network learns more enriched temporal knowledge and reasoning skills from the source TKG to facilitate the learning of a student network in the low-data target one. The knowledge transfer is enabled via an alignment module, which estimates entity correspondence across languages based on temporal patterns. Firstly, to alleviate the limited language alignments (\textbf{Challenge \#1}), such a knowledge distillation process is mutually paced over time. This means, on one hand, we encourage the mutually interactive learning between the teacher and student. Concretely, the alignment module between the teacher and the student learns to generate pseudo alignment between TKGs to maximally expand the upper bound of knowledge transfer. And subsequently, it empowers the student to encode more informative knowledge in target TKG, which can in turn boost the alignment module to explore more reasonable alignments as the bridge across TKGs. One the other hand, inspired by self-paced learning~\cite{spl-1,spl-2}, we make the generations as a progressively easy-to-hard process over time. We start from generating reliable pseudo data with high confidence. As time goes by, we then gradually increase the generation amount by relieving the restriction over time. Secondly, to inhibit the temporal knowledge mismatch (\textbf{Challenge \#2}), the attention module can estimate the graph alignment strength distribution over time. This is achieved by a temporal cross-lingual attention in terms of the local graph structure and temporal-evolving patterns of aligned entities. As such, it can dynamically control the negative effect and suppress noise  propagation from the source TKG. Moreover, we provide a theoretical convergence guarantee for the training objective on both initial ground-truth data and pseudo data. To evaluate \model, we conduct extensive experiments of 12 cross-lingual TKG transfer tasks in multilingual EventKG dataset~\cite{EventKG}. Our empirical results show that the \model method outperforms state-of-the-art baselines in both with and without alignment noise settings, where only $20\%$ of temporal events in the target KG and $10\%$ of cross-lingual alignments are preserved.


To sum up, our contributions are three-fold:

\begin{itemize}[leftmargin = 15pt]
    \item \textbf{Problem formulation}: We propose the cross-lingual temporal knowledge graph reasoning task, to boost the temporal reasoning performance in target TKG by transferring knowledge from source TKG;
    \item \textbf{Novel framework}: We propose a novel \model framework, which enables the mutually-paced learning between the teacher and student networks, to promote both pseudo alignments and knowledge transfer reliability. Besides, \model involves a dynamic alignment estimation across TKGs that inhibits the influence of temporal knowledge discrepancy.
    \item \textbf{Extensive evaluations}: Empirically, extensive experiments on 12 cross-lingual TKG transfer tasks in multilingual EventKG benchmark dataset demonstrate the effectiveness of \model.
\end{itemize}


\section{Preliminaries and Notations}
In this section, we formally define the cross-lingual temporal knowledge graph reasoning task, and summarize the notations in Table~\ref{tb:notation}. A temporal knowledge graph can be defined as follows:

\begin{table}[t]
    \centering
    \caption{Symbols and Notations.}
    \label{tb:notation}
    \small
    \resizebox{1.0\linewidth}{!}{
    \fontsize{8.5}{11}\selectfont
    \begin{tabular}{c|c}
    \toprule
    \textbf{Symbol} & \textbf{Definition} \\ \midrule
    $(e, r, e^\prime, t)$ & A quadruple in TKG. \\
    $\mathcal{G}_s$, $\mathcal{G}_t$  & Source TKG and Target TKG.\\
    $e_s$, $e_t$ & Entities in the source and target TKGs. \\
    $\Gamma_{s\leftrightarrow t}$ & Alignments between the source and target TKGs. \\
    $\Tilde{\mathcal{G}}_t$, $\Tilde{\Gamma}_{s\leftrightarrow t}$ & Incomplete target TKG and alignments. \\
    $\Tilde{\mathcal{G}}_t^{ST}$, $\Tilde{\Gamma}_{s\leftrightarrow t}^{ST}$ & Pseudo target TKG and  pseudo alignment. \\
    $f(\cdot; \Theta_s)$ & Teacher network on the source TKG.  \\
    $f(\cdot; \Theta_t)$ & Student network on the target TKG.  \\
    $g(e_s, e_t, t; \Phi)$ & Alignment module measuring correspondence of $(e_s, e_t)$ at $t$. \\
    $\mathcal{L}_{\Tilde{\mathcal{G}}_t}$, $\mathcal{L}_{\Tilde{\mathcal{G}}_t^{ST}}$ & Reasoning loss on groundtruth/pseudo target TKG. \\
    $\mathcal{L}_{\Tilde{\Gamma}_{s\leftrightarrow t}}$, $\mathcal{L}_{\Tilde{\Gamma}_{s\leftrightarrow t}^{ST}}$ & Alignment loss on groundtruth/pseudo alignment pairs. \\
    $\mathcal{L}_{s \rightarrow t}$ & Cross-lingual reasoning loss from source TKG to target TKG. \\
    $\mathcal{L}_{s \rightarrow t}^{ST}$ & Cross-lingual reasoning loss on both groundtruth and pseudo data. \\
    \bottomrule
    \end{tabular}}
\end{table}

\begin{definition}[{\bf Temporal Knowledge Graph}]  A temporal knowledge graph (TKG) is denoted as $\mathcal{G} = \{(e, r, e^\prime, t) | t \leq T\} \subseteq \mathcal{E} \times \mathcal{R} \times \mathcal{E} \times \mathcal{T}$, where $\mathcal{E}$ denotes the entities set, $\mathcal{R}$ denotes the relation set, $\mathcal{T}$ denotes the timestamp set, and $T$ denotes the latest update time. Each quadruple $(e, r, e^\prime, t)$ refers to an event that a subject entity $e \in \mathcal{E}$ has a relation $r \in \mathcal{R}$ with an object entity $e^\prime \in \mathcal{E}$ at timestamp $t \in \mathcal{T}$.
\end{definition}


\begin{definition}[{\bf Multilingual TKGs and Alignments}] To denote multilingual TKGs, we further utilize subscript to represent specific languages, i.e., $\mathcal{G}_s$ denotes TKG in the source language and $\mathcal{G}_t$ denotes TKG in the target language. The corresponding entities can be denoted as $e_s$ and $e_t$ respectively. Given two different languages $s$,$t$, we have the cross-lingual alignment set $\Gamma_{s\leftrightarrow t}$. To be more practical, we further assume the TKG in target language $\mathcal{G}_t$ and alignment set $\Gamma_{s\leftrightarrow t}$ are incomplete: $\Tilde{\mathcal{G}}_t$, $\Tilde{\Gamma}_{s\leftrightarrow t}$. 
\end{definition}


\begin{figure*}
    \centering
    \includegraphics[width = 0.9\linewidth]{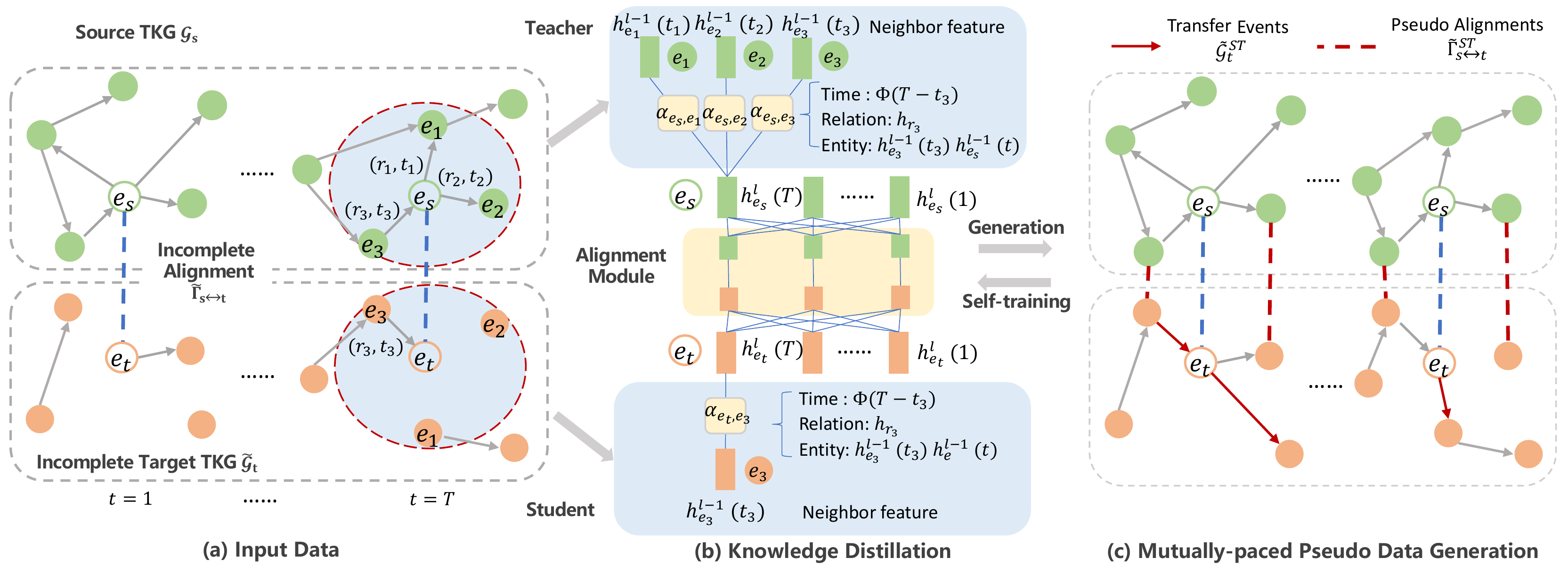}
    \vspace{-2mm}
    \caption{An overview of \model. (a) The source TKG is more complete than the target TKG, and the cross-lingual alignments are also scarce; (b) A teacher/student representation module to represent source/target TKG, and an alignment module for knowledge transfer; (c) Mutually-paced knowledge distillation between knowledge transfer and pseudo alignment generation.}
    \label{fig:framework}
    \vspace{-3mm}
\end{figure*}

Based on the definition above, we formalize our cross-lingual reasoning task on TKGs as follows:
\begin{definition} [{\bf Cross-lingual reasoning on TKGs}]  Given the TKG $\mathcal{G}_s$ in the source language and the incomplete TKG $\Tilde{\mathcal{G}}_t$ in the target language before the latest update time $T$, and the incomplete cross-lingual alignment $\Tilde{\Gamma}_{s\leftrightarrow t}$, we aim to predict future events in the target TKG after time $T$. Concretely, we aim to predict missing entity in each future quadruple: $\{(e_t,r,?,t)~\text{or}~(?,r,e_t^\prime,t) | t > T\}$ in the target TKG.
\end{definition}


\section{Methodology}
In this section, we present the proposed \model framework for the cross-lingual temporal knowledge graph reasoning task.

\subsection{Overview}
Figure~\ref{fig:framework} shows an overview of \model. Given the TKGs in source language and target language, the teacher network and the student network first represent the source and target TKGs in a temporally evolving uni-space respectively. To facilitate training of the student, the knowledge distillation is enabled by a cross-lingual alignment module and an explicit temporal event transfer process. To deal with the scarcity issue of cross-lingual alignments, we propose a pseudo alignment generation technique to facilitate the knowledge distillation process, which is mutually-paced along with model training.  To address the temporal knowledge discrepancy issue, the alignment module pulls the aligned entities close to each other based on the alignment strength which is dynamically adjusted.

Section~\ref{sec:encoder} and Section~\ref{sec:align} introduce our teacher/student network and the knowledge distillation respectively, followed by Section~\ref{sec:st} which details how we generate pseudo alignments. Finally, Section~\ref{sec:obj} specifies our learning objective on both groundtruth data and pseudo data, and summarizes the training of \model.

\subsection{The Teacher/Student Network}
\label{sec:encoder}
We train two identical temporal representation modules on source and target TKGs with different parameters. The representation module $f(\cdot;\Theta)$ parameterized by $\Theta$ is designed to measure the plausibility of each quadruple, which represents each entity $e$ into a low-dimensional latent space at each time: $\mathbf{h}_{e}(t) \in \mathbb{R}^d$. On a TKG $\mathcal{G}$, entities $e \in \mathcal{E}$ are evolving, as they interact with different entities over time. Such temporally interacted entities are defined as temporal neighbors. Therefore, we aim to model the temporal pattern of each entity $e$ by encoding the changes of temporal neighbors. 

Towards this goal, $f(\cdot;\Theta)$ first samples temporal neighbors $\mathcal{N}_{e}(t)$ from the TKG for each entity $e\in \mathcal{E}$. $\mathcal{N}_{e}(t)$ consists of a set of the most recently interacted entities at time $t$. Then $f(\cdot;\Theta)$ attentively aggregates information from the temporal neighbors. Specifically, given the temporal neighbor $\mathcal{N}_{e}(t)$, we represent the entity $e$ as $\mathbf{h}_{e}(t)$ at time $t$:
\begin{equation}
    \small
    \mathbf{h}^l_{e}(t) = \sigma\left(\sum_{(e_i, r_i, t_i) \in \mathcal{N}_{e}(t)} \alpha^l_{e,e_i} \left(\mathbf{h}_{e_i}^{l-1}(t_i) \mathbf{W}\right)\right),
\end{equation}
\noindent where $l$ denotes the layer number, $\sigma(\cdot)$ denotes the activation function {\em ReLU}, $\alpha^l_{e,e_i}$ denotes the attention weight of entity $e_i$ to the represented entity $e$, and $\mathbf{W}$ is the trainable transformation matrix. To aggregate from history, $\alpha^l_{e,e_i}$ is supposed to be aware of entity feature, time delay and topology feature induced by relations. Thus, we design the attention weight $\alpha^l_{e,e_i}$ as follows:
\begin{equation}
    \small
      \alpha^l_{e,e_i} = \frac{\exp(q^l_{e,e_i})}{\underset{(e_k, r_k, t_k)\in \mathcal{N}_{e}(t)}{\sum}\exp(q^l_{e,e_k})}, ~~ q^l_{e,e_k} = \mathbf{a}\left(\mathbf{h}^{l-1}_{e} \| \mathbf{h}^{l-1}_{e_k} \| \mathbf{h}_{r_k} \| \kappa(t - t_k)\right),  
\end{equation}
\noindent
where $q^l_{e,e_k}$ measures the pairwise importance by considering the entity embedding, relation embedding and time embedding, $\mathbf{a}\in\mathbb{R}^{4d}$ is the shared parameter in the attention mechanism. Following~\cite{TGAT} we adopt random Fourier features as time encoding $\kappa(\Delta t)$ to reflect the time difference.

To measure plausibility of each possible quadruple, we utilize TransE~\cite{TransE} as the score function $f(e, r, e^\prime, t; \Theta) = -\|\mathbf{h}^l_{e}(t) + \mathbf{h}_r - \mathbf{h}^l_{e^\prime}(t) \|^2$, where true quadruples should have higher scores. To optimize the parameter $\Theta$ on a TKG $\mathcal{G}$, we set the objective to rank the scores of true quadruples higher than all other false quadruples produced by negative sampling:
\begin{equation}
    \small
    \mathcal{L}_{\mathcal{G}} = \underset{(e, r, e^\prime, t) \in \mathcal{G}}{\mathbb{E}} \left[\max(0, \lambda_1 - f(e, r, e^\prime, t; \Theta) + f(e, r, e^{-}, t;\Theta))\right],
    \label{eq:kgloss}
\end{equation}
\noindent
where $(e, r, e^{-}, t)$ is negative samples with object $e^{\prime}$ replaced by $e^{-}$, $\lambda_1$ is the margin to distinguish positive and negative quadruples. 

\subsection{Knowledge Distillation}
\label{sec:align}
The incomplete target TKG, $\Tilde{\mathcal{G}}t$, can be used to train the corresponding parameter $\Theta_t$ through minimization of $\mathcal{L}{\Tilde{\mathcal{G}}_t}$. However, the low-resource nature of the target language often results in an incomplete target TKG, leading to suboptimal $\Theta_t$. In light of this, we propose a knowledge distillation approach to transfer temporal knowledge from the source TKG to the target TKG. The proposed approach consists of two components: an alignment module that enhances $\Theta_t$ using the more informative $\Theta_s$ learned from the source TKG, and an explicit temporal event transfer based on the improved parameters. This integrated approach aims to improve the completeness and quality of the target TKG by leveraging the knowledge contained in the source TKG.

\noindent \textbf{The Alignment Module}.
In general, the source parameters $\Theta_s$ provide a more informative representation of each entity $e \in \mathcal{E}$ compared to the target parameters $\Theta_t$. To take advantage of this, we utilize $\Theta_s$ to guide the optimization of $\Theta_t$ through the alignment module $g(\cdot;\Phi)$, which measures the correspondence between each pair of entities and is parameterized by $\Phi$.

Directly pulling embeddings of aligned entities at all time steps can transfer misleading knowledge due to the temporal knowledge discrepancy. Therefore, the alignment module first utilizes a temporal attention layer to integrate information of each entity from history in both source and target TKGs, i.e., $\mathbf{H}^s_{e}(t), \mathbf{H}^t_{e}(t) \in\mathbb{R}^d$, then it pulls such integration $\mathbf{H}^s_{e}(t)$ close to $\mathbf{H}^t_{e}(t)$ instead of the initial $\mathbf{h}^s_{e}(t)$ and $\mathbf{h}^t_{e}(t)$. Moreover, the temporal integration $\mathbf{H}^s_{e}(t)$ and $\mathbf{H}^t_{e}(t)$ also encode the temporal evolution information for each entity, which can be utilized to estimate the adaptive alignment strength at different time to improve the alignment module. Concretely, the temporal integration is learned by:
\begin{equation}
\begin{aligned}
    \small
    \mathbf{H}^s_{e}(t) &= \text{Temporal-Attn}(\mathbf{h}^s_{e}(1), \mathbf{h}^s_{e}(2), \cdots, \mathbf{h}^s_{e}(t)), \\
    \mathbf{H}^t_{e}(t) &= \text{Temporal-Attn}(\mathbf{h}^t_{e}(1), \mathbf{h}^t_{e}(2), \cdots, \mathbf{h}^t_{e}(t)),
\end{aligned}
\end{equation}
\begin{equation}
    \small
    g(e_s, e_t, t; \Phi) = \frac{\mathbf{H}^s_{e}(t) \cdot \mathbf{H}^t_{e}(t)}{\|\mathbf{H}^s_{e}(t)\|_2 \cdot \|\mathbf{H}^t_{e}(t)\|_2},
\end{equation}
\noindent
where $\text{Temporal-Attn}$ is the temporal attention network designed to integrate information on the temporal domain. The correspondence between each pair of entities $(e_s, e_t)$ across source and target languages at time $t$ is measured by $g(e_s, e_t, t; \Phi)$. As the temporal knowledge for aligned entities is not identical, the alignment strength between them should vary across time $t$. The alignment strength is strong when the two entities share similar information, and weak when the information is dissimilar or the alignment is unreliable. This variability is achieved through the design of a trainable weight $\beta_{e,t}$ to adjust the alignment strength for different entities at different times, which is generated by a cross-lingual attention layer:
\begin{equation}
    \small
    \beta_{e,t} = \text{Cross-Attn}(key = \mathbf{H}^t_{e}(1:T), query = \mathbf{H}^s_{e}(1:T))_{tt}.
\end{equation}

Due the page limitation, we refer readers to Appendix~\ref{ap:attn} for the detailed implementation of $\text{Temporal-Attn}(\cdot)$ and $\text{Cross-Attn}(\cdot)$.

To optimize the parameter $\Phi$ on the incomplete alignments $\Tilde{\Gamma}_{s \leftrightarrow t}$, we set the objective in order to rank the correspondence of true alignments higher than false alignments:
\begin{equation}
\small
    \mathcal{L}_{\Tilde{\Gamma}_{s \leftrightarrow t}} = \underset{\Tilde{\Gamma}_{s \leftrightarrow t}}{\mathbb{E}} \left[\underset{t \in \mathcal{T}}{\mathbb{E}} \left[ \beta_{e,t} \cdot \max(0, \lambda_2 - g(e_s, e_t, t; \Phi) + g(e_s, e_t^{-}, t; \Phi)) \right]\right],
    \label{eq:alignloss}
\end{equation}
\noindent
where the entity pair $(e_s, e_t) \in \Tilde{\Gamma}_{s \leftrightarrow t}$ is the aligned entities across languages, $(e_s, e_t^{-})$ is the negative samples, $\lambda_2$ is the margin value.

\noindent \textbf{Temporal Event Transfer}.
Cross-lingual alignments offer the potential to directly transfer temporal events towards the progressive completion of the target TKG. This is based on the premise that entities that are reliably aligned are likely to experience similar temporal events across languages, with the same relations. 

Given an aligned pair $(e_s, e_t)$, the temporal event $(e_t, r, e_t^?, t)$ or $(e_t^?, r, e_t, t)$ is added to the target TKG if the corresponding event $(e_s, r, e_s^?, t)$ or $(e_s^?, r, e_s, t)$ exists in the source TKG $\mathcal{G}_s$. To determine the missing entity $e_t^?$, we first verify if $(e_s^?, e_t^?)$ is present in the alignment set. If so, the temporal event is directly added to the target TKG. Otherwise, the updated student network $f(\cdot;\Theta_t)$ is utilized to predict the missing entity and the top-1 entity is utilized to complete the temporal event. We define the set of transferred temporal events in the target TKG as $\Tilde{\mathcal{G}}_t^{ST}$ for ease of discussion.

\subsection{Generating Pseudo Alignments}
\label{sec:st}
The limited amount of cross-lingual alignments negatively constrain the effect of the knowledge distillation process. In this section, we introduce how to generate pseudo alignments $\Tilde{\Gamma}_{s\leftrightarrow t}^{ST}$ with high confidence to boost cross-lingual transfer effectiveness.

To expand the range of alignments used in the knowledge transfer process, we generate pseudo-alignments with high confidence scores and incorporate them into the training data. The confidence score for each pair of entities $(e_s, e_t)$ is calculated as the average cosine similarity: $sim(e_s, e_t) = \underset{t}{\mathbb{E}}\left[g(e_s, e_t, t;\Phi)\right]$. While pair-wise similarity comparison is computationally intensive, we improve efficiency by first adding alignments for entities that are neighbors of already aligned entities $\Tilde{\mathcal{E}}t = \{e_t|(e_s, e_t) \in \Tilde{\Gamma}_{s\leftrightarrow t}\}$ in the target TKG, as they are likely to be represented well to produce reliable alignment. Following~\cite{bootEA}, we formulate the generation process as solving the following optimization problem:
\begin{equation}
    \begin{aligned}
    \max &\sum_{e_t \in \mathcal{N}(\Tilde{\mathcal{E}}_t)} \sum_{e_s\in\mathcal{E}_s} sim(e_s, e_t) \cdot \phi(e_s, e_t), \\
    s.t. & \sum_{e_s\in\mathcal{E}_s} \phi(e_s, e_t) = 1, ~~~ \sum_{e_t \in \mathcal{N}(\Tilde{\mathcal{E}}_t)} \phi(e_s, e_t) = 1,
    \end{aligned}
    \label{eq:pseudoalign}
\end{equation}
\noindent
where $\phi(e_s, e_t)$ is a binary indicator of whether to add $(e_s, e_t)$ as pseudo alignment, $\phi(e_s, e_t) = 1$ if we choose to add this pair, otherwise  $\phi(e_s, e_t) = 0$. The two constrains can guarantee each entity $e_t \in \mathcal{N}(\Tilde{\mathcal{E}}_t)$ is aligned to at most one entity in source language $e_s\in\mathcal{E}_s$. Finally, all pairs that satisfying $\phi(e_s, e_t) = 1$ can be viewed as candidates to be added into alignment data. We further select the top ones in terms of $sim(e_s, e_t)$ to control the total size of pseudo alignments. Notably, in each generation, the target entities to be aligned can already have the alignment, i.e., $\mathcal{N}(\Tilde{\mathcal{E}}_t) \bigcup \Tilde{\mathcal{E}}_t \neq \emptyset$. In this case, we can update the existing alignments with the pseudo ones to eliminate the possible alignment noise.

\label{sec:opt}
\begin{algorithm}[t]
\caption{The optimization process for \model.}
\label{al:training}
\small
\KwIn{Source TKG $\mathcal{G}_s$, incomplete target TKG $\Tilde{\mathcal{G}}_t$, incomplete alignment $\Tilde{\Gamma}_{s\leftrightarrow t}$.}
\KwOut{Student Model Parameter $\Theta_t$ for target TKG.}
Optimize $\Theta_s$ by minimizing $\mathcal{L}_{\mathcal{G}_s}$ on source TKG; \\
Initialize $\Theta_t$ $\leftarrow$ $\Theta_s$ for target TKG; \\
\While{model not converged}{
    \textbf{Optimize alignment module} $g(\cdot;\Phi)$:\\
    Minimize $\mathcal{L}_{s\rightarrow t}^{ST}$ in Eq.~\eqref{eq:stloss} w.r.t. alignment parameter $\Phi$; \\
    Transfer temporal events $\Tilde{\mathcal{G}}_t^{ST}$ based on updated $\Theta_t$;\\
    \textbf{Optimize student representation module} $f_t(\cdot;\Theta_t)$: \\
    \For{Each time step $T_i$ during training}{
        Prepare training data $\{(e_t,r,e_t^\prime, t) | (e_t,r,e_t^\prime, t) \in \Tilde{\mathcal{G}}_t \bigcup \Tilde{\mathcal{G}}_t^{ST}~\text{and}~T_i < t < T_{i+1}\}$\\
        Update $\Theta_t$ by minimizing $\mathcal{L}_{s\rightarrow t}^{ST}$ in Eq.~\eqref{eq:stloss};\\
    }
    Generate pseudo alignments $\Tilde{\Gamma}_{s\leftrightarrow t}^{ST}$ based on updated $\Phi$;
}
\end{algorithm}

\subsection{Mutually-paced Optimization}
\label{sec:obj}
\noindent \textbf{Learning Objective}.
Given a source TKG $\mathcal{G}_s$, the incomplete target TKG $\Tilde{\mathcal{G}}_t$, and the incomplete cross-lingual alignment $\Tilde{\Gamma}_{s\leftrightarrow t}$, the objective of cross-lingual temporal knowledge graph reasoning $\mathcal{L}_{s\rightarrow t}$ can be summarized as follows:
\begin{equation}
\mathcal{L}_{s\rightarrow t} = \mathcal{L}_{\Tilde{G}_t} + \mathcal{L}_{\Tilde{\Gamma}_{s\leftrightarrow t}},
\label{eq:gtloss}
\end{equation}
\noindent
where $\mathcal{L}_{\Tilde{G}_t}$ denotes knowledge graph reasoning loss which measures the correctness of each quadruple, $\mathcal{L}_{\Tilde{\Gamma}_{s\leftrightarrow t}}$ denotes the alignment loss which measures the distance of aligned entities in the uni-space. To enlarge the knowledge distillation effect, we progressively transfer temporal events $\Tilde{\mathcal{G}}_t$ and generate high-quality pseudo alignment $\Tilde{\Gamma}_{s\leftrightarrow t}^{ST}$. Therefore, the training objective on both ground-truth data and pseudo data $\mathcal{L}_{s\rightarrow t}^{ST}$ becomes:
\begin{equation}
    \small
    \begin{aligned}
        \mathcal{L}_{s\rightarrow t}^{ST} &= \frac{|\Tilde{\mathcal{G}}_t|}{|\Tilde{\mathcal{G}}_t| + |\Tilde{\mathcal{G}}_t^{ST}|} \cdot \mathcal{L}_{\Tilde{\mathcal{G}}_t} + \frac{|\Tilde{\mathcal{G}}_t^{ST}|}{|\Tilde{\mathcal{G}}_t| + |\Tilde{\mathcal{G}}_t^{ST}|} \cdot  \mathcal{L}_{\Tilde{\mathcal{G}}_t^{ST}} \\
        &+ \frac{|\Tilde{\Gamma}_{s\leftrightarrow t}|}{|\Tilde{\Gamma}_{s\leftrightarrow t}| + |\Tilde{\Gamma}_{s\leftrightarrow t}^{ST}|} \cdot \mathcal{L}_{\Tilde{\Gamma}_{s\leftrightarrow t}} + \frac{|\Tilde{\Gamma}_{s\leftrightarrow t}^{ST}|}{|\Tilde{\Gamma}_{s\leftrightarrow t}| + |\Tilde{\Gamma}_{s\leftrightarrow t}^{ST}|} \cdot \mathcal{L}_{\Tilde{\Gamma}_{s\leftrightarrow t}^{ST}},
    \end{aligned}
    \label{eq:stloss}
\end{equation}
\noindent
where $|\cdot|$ denotes the set size. Eq.~\eqref{eq:stloss} formulates the learning objective on both scarce data and pseudo data for cross-lingual temporal knowledge graph reasoning in target languages. We give the convergence analysis in the following theorem:

\begin{theorem}
Let $N$ denote the number of negative samples for optimization, $\epsilon$ denotes the portion of correct pseudo data, $\beta$ denotes the proportion of pseudo data to the initial ground-truth data. As the number of negative samples $N \rightarrow \infty$, the $\mathcal{L}_{s\rightarrow t}^{ST}$ converges to its limit with an absolute deviation decaying in $O(\frac{1+\epsilon}{1+\beta}\cdot N ^{-2/3})$.
\end{theorem}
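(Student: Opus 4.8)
The plan is to view $\mathcal{L}_{s\rightarrow t}^{ST}$ as a convex combination of four hinge‑type losses, establish one concentration estimate for such a loss under $N$‑fold negative sampling, and then reassemble the four pieces with their mixing weights. Throughout I fix the parameters $\Theta_t,\Phi$ at the point where the loss is evaluated, so that all randomness lies in the $N$ sampled negatives, write $\mathcal{L}_\infty^{ST}$ for the $N\to\infty$ limit, and bound $|\mathcal{L}_{s\rightarrow t}^{ST}-\mathcal{L}_\infty^{ST}|$.

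\textbf{Reduction.} By Eq.~\eqref{eq:kgloss} and Eq.~\eqref{eq:alignloss}, each of the four summands in Eq.~\eqref{eq:stloss} is an average over (ground‑truth or pseudo) facts of a term $\max(0,\lambda-s^{+}+s^{-})$, where $s^{+}$ is a fixed positive score and $s^{-}$ is assembled from $N$ i.i.d.\ negatives drawn from the negative‑sampling distribution. Since the embeddings produced by $f(\cdot;\Theta)$ lie in a bounded ball (so the TransE scores $-\|\cdot\|^2$ are uniformly bounded) and $g(\cdot;\Phi)$ is a cosine similarity in $[-1,1]$, every hinge term lies in $[0,C]$ for a universal $C=C(d,\lambda_1,\lambda_2)$. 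For the two pseudo summands I split the average into the $\epsilon$‑fraction of \emph{correct} pseudo facts, which obey the same score law as genuine facts, and the $(1-\epsilon)$‑fraction of incorrect ones; because pseudo facts are admitted only at high confidence, the latter have saturated hinge terms and contribute only sub‑dominantly to the deviation, so the deviation of a pseudo summand is $O(\epsilon)$ times that of its clean counterpart.

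\textbf{The rate.} This is the crux: I would show $\mathbb{E}\,|\ell_N-\ell_\infty|=O(N^{-2/3})$ for a single hinge term, where $\ell_N$ is its $N$‑negative value and $\ell_\infty$ its limit. The map $s^{-}\mapsto\max(0,\lambda-s^{+}+s^{-})$ is $1$‑Lipschitz and piecewise linear, so $\ell_N-\ell_\infty$ is controlled by how fast the effective negative score assembled from $N$ draws approaches its limit. Under a mild regularity condition on the negative‑sampling geometry — equivalently, that the law of the margin‑gap variable $\lambda-s^{+}+s^{-}$ has a bounded density near $0$, so the non‑smooth functional $\ell$ admits a truncation‑level‑$h$ decomposition with bias $O(h^{2})$ and residual fluctuation $O\!\big(h\,(Nh)^{-1/2}\big)$ (for the squared‑distance TransE score this amounts to the best of $N$ candidate embeddings sitting within $\Theta(N^{-1/3})$ of the optimum in score) — balancing the two contributions at $h\asymp N^{-1/3}$ yields $O(N^{-2/3})$. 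I would prove this once and invoke it for all four summands. I expect this to be the only genuine obstacle: it is exactly where the non‑differentiability of the hinge meets the concentration of the extreme/averaged negative scores, and a real assumption on the distribution of the learned embeddings (or on the margin‑gap density) must be imposed; everything else is bounded‑quantity bookkeeping.

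\textbf{Reassembly.} Using $|\widetilde{\mathcal{G}}_t^{ST}|=\beta|\widetilde{\mathcal{G}}_t|$ and $|\widetilde{\Gamma}_{s\leftrightarrow t}^{ST}|=\beta|\widetilde{\Gamma}_{s\leftrightarrow t}|$, the four coefficients in Eq.~\eqref{eq:stloss} are $\tfrac{1}{1+\beta}$ for the ground‑truth reasoning and alignment losses and $\tfrac{\beta}{1+\beta}$ for the two pseudo losses. The triangle inequality then bounds the deviation of $\mathcal{L}_{s\rightarrow t}^{ST}$ by $\tfrac{1}{1+\beta}\big(D_{\mathcal{G}}+D_{\Gamma}\big)+\tfrac{\beta}{1+\beta}\big(D_{\mathcal{G}}^{ST}+D_{\Gamma}^{ST}\big)$, where by the previous two steps $D_{\mathcal{G}},D_{\Gamma}=O(N^{-2/3})$ and $D_{\mathcal{G}}^{ST},D_{\Gamma}^{ST}=O(\epsilon N^{-2/3})$; the prefactor collapses to $\tfrac{1+\beta\epsilon}{1+\beta}=O\!\big(\tfrac{1+\epsilon}{1+\beta}\big)$ (for $\beta\le 1$, as in the progressive regime), which gives the claimed $O\!\big(\tfrac{1+\epsilon}{1+\beta}\,N^{-2/3}\big)$.
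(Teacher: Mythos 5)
Your overall architecture matches the paper's: decompose $\mathcal{L}_{s\rightarrow t}^{ST}$ into its four weighted summands, bound the deviation of each ground-truth term by some $D=O(N^{-2/3})$ and of each pseudo term by $\epsilon D$, and recombine with the weights $\tfrac{1}{1+\beta}$ and $\tfrac{\beta}{1+\beta}$ to get $\tfrac{1+\beta\epsilon}{1+\beta}D\le\tfrac{1+\epsilon}{1+\beta}D$. That bookkeeping, including the observation that the stated prefactor is really an upper bound on $\tfrac{1+\beta\epsilon}{1+\beta}$, is exactly what the paper does, and your treatment of the incorrect pseudo fraction as sub-dominant mirrors the paper's implicit assumption that only the $\epsilon$-fraction of correct pseudo data drives the deviation.

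The genuine gap is the step you yourself flag as "the crux": the $O(N^{-2/3})$ rate for a single term is asserted, not proved. The paper does not work with the hinge loss directly; it first rewrites each margin objective (Eq.~\eqref{eq:kgloss}, Eq.~\eqref{eq:alignloss}) in Noise Contrastive Estimation form, so that the $N$ negatives enter through a log-sum-exp partition term $\log\bigl(\tfrac{\lambda}{N}e^{f/\tau}+\tfrac{1}{N}\sum_{-}e^{f^{-}/\tau}\bigr)$, identifies the limit of $\mathcal{L}-\log N$ as $-\tfrac{1}{\tau}\mathbb{E}[f]+\mathbb{E}\bigl[\log\mathbb{E}_{e^{-}}[e^{f^{-}/\tau}]\bigr]$, and then imports two-sided concentration bounds (following its cited prior work) of the form $\tfrac{\lambda}{N}e^{2/\tau}$ plus $\tfrac{5}{4}N^{-2/3}e^{1/\tau}(e^{1/\tau}-e^{-1/\tau})$, where $\lambda$ counts collisions between positives and sampled negatives. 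Your proposal instead keeps the piecewise-linear hinge, posits that the effective negative score is "assembled from $N$ i.i.d.\ negatives," and derives $N^{-2/3}$ from a bias--fluctuation balance $O(h^2)$ versus $O(h(Nh)^{-1/2})$ under an unverified bounded-density assumption on the margin-gap variable. Two problems follow. First, as written in Eq.~\eqref{eq:kgloss} the hinge loss is an expectation over single negatives and has no intrinsic $N$-dependence, so without the NCE reformulation (or some explicit statement of how $N$ negatives enter the per-example loss) the quantity whose limit you are taking is not pinned down; the paper's theorem is really about the centered loss $\mathcal{L}-\log N$, whereas you treat the raw hinge as having a finite limit, which changes what must be shown. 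Second, the $O(h^2)$ bias claim and the regularity condition that delivers it are exactly the content of the theorem, and deferring them means the proof establishes the prefactor $\tfrac{1+\epsilon}{1+\beta}$ but not the rate $N^{-2/3}$. To close the gap you would either need to prove your single-term lemma under explicit assumptions on the negative-sampling distribution, or follow the paper's route and reduce to the known NCE concentration result.
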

\begin{proof}
Refer to Appendix~\ref{ap:proof}.
\end{proof}

\noindent \textbf{Mutually-paced Optimization and Generation}. 
The \model framework can be optimized by minimizing Eq.~\eqref{eq:stloss} w.r.t. $\Theta$ and $\Phi$ alternatively. The generated pseudo alignments can help the training of the representation modules by the knowledge distillation, and in turn transferring temporal events in the target TKG can improves alignment module by providing high-quality representations. In light of this, we propose a mutually-paced optimization and generation procedure. Generally speaking, we iteratively generate pseudo alignments and update the representation module and alignment module respectively. To be concrete, as shown in Algorithm~\ref{al:training}, we first update the alignment module $g(\cdot;\Phi)$ and transfer temporal events to transfer knowledge from source to target. Then we divide the time span into several time steps to update the student representation module from recent time step to far away ones. Finally, we generate the pseudo alignments, as the optimization $\Theta_t$ on all temporal events can improve the entity feature quality, which is beneficial for alignment prediction.  Algorithm~\ref{al:training} summarizes the training procedure. 


\section{Experiment}

\begin{table}[t]
\caption{Statistics of the datasets.}
\label{tb:data}
\small
\centering
\resizebox{1.0\linewidth}{!}{
    \fontsize{8.5}{11}\selectfont
\begin{tabular}{c|ccccc}
\toprule
\textbf{Languages} & \textbf{Entity} & \textbf{Relation} & \textbf{Quadruple} & \textbf{Train/Val/Test} & \textbf{Time} \\ \midrule
\textbf{English (EN)} & 34,416 & 105 & 602K & 602K/0K/0K & 28 \\
\textbf{French (FR)} & 32,546 & 105 & 580K & 580K/0K/0K & 28 \\ \midrule
\textbf{Spanish (ES)} & 31,808 & 105 & 316K & 114K/136K/66K & 40 \\
\textbf{German (DE)} & 27,657 & 105 & 268K & 97K/114K/56K & 40 \\
\textbf{Italian (IT)} & 23,734 & 94 & 236K & 84K/100K/51K & 40 \\
\textbf{Danish (DA)} & 15,710 & 94 & 125K & 48K/50K/26K & 40 \\
\textbf{Slovene (SL)} & 13,250 & 94 & 55K & 24K/21K/10K & 40 \\
\textbf{Bulgarian (BG)} & 3,508 & 105 & 23K & 8K/9K/6K & 40 \\ 
\bottomrule
\end{tabular}}
\end{table}
\begin{table*}[t]
\caption{Overall Performance without alignment noise. Average results on $5$ independent runs are reported. $*$ indicates the statistically significant improvements over the best baseline, with $p$-value smaller than $0.01$.}
\label{tb:clean}
\small
\centering
\resizebox{0.85\textwidth}{!}{
\fontsize{8.5}{11}\selectfont
\begin{tabular}{c|c|cccccccccccccc}
\toprule
\textbf{Models} & \textbf{Target} & \multicolumn{2}{c}{\textbf{ES}} & \multicolumn{2}{c}{\textbf{DE}} & \multicolumn{2}{c}{\textbf{IT}} & \multicolumn{2}{c}{\textbf{DA}} & \multicolumn{2}{c}{\textbf{BG}} & \multicolumn{2}{c}{\textbf{SL}} & \multicolumn{2}{c}{\textbf{Avg.}} \\ \hline
\textbf{} & \textbf{Source} & \textbf{MRR} & \textbf{H@10} & \textbf{MRR} & \textbf{H@10} & \textbf{MRR} & \textbf{H@10} & \textbf{MRR} & \textbf{H@10} & \textbf{MRR} & \textbf{H@10} & \textbf{MRR} & \textbf{H@10} & \textbf{MRR} & \textbf{H@10} \\ \hline
\rowcolor{LightCyan}
\textbf{RE-GCN w/o source} & \textbf{NA} & 14.31 & 31.85 & 16.32 & 34.19 & 14.59 & 31.64 & 14.19 & 31.24 & 10.27 & 23.44 & 9.33 & 21.63 & 13.17 & 29.00 \\ \hline
\multicolumn{16}{c}{ \textbf{Static KG embedding methods}} \\ \hline
\multirow{3}{*}{\textbf{TransE}~\cite{TransE}} & \textbf{EN} & 11.67 & 26.73 & 15.19 & 31.37 & 9.15 & 21.44 & 12.71 & 23.31 & 10.17 & 23.72 & 9.73 & 21.83 & 11.44 & 24.73 \\
 & \textbf{FR} & 12.37 & 27.79 & 14.01 & 28.30 & 11.38 & 23.19 & 10.05 & 22.10 & 11.88 & 23.01 & 10.63 & 22.44 & 11.72 & 24.47 \\
 \rowcolor{mygray}
 & \textbf{T.R.} & 0.84 & 0.86 & 0.89 & 0.87 & 0.70 & 0.71 & 0.80 & 0.73 & 1.07 & 1.00 & 1.09 & 1.02 & 0.88 & 0.85 \\ \hline
\multirow{3}{*}{\textbf{TransR}~\cite{TransR}} & \textbf{EN} & 11.88 & 28.66 & 16.01 & 32.01 & 8.14 & 22.07 & 13.34 & 24.73 & 10.33 & 23.51 & 8.89 & 22.12 & 11.43 & 25.52 \\
 & \textbf{FR} & 12.01 & 28.32 & 14.58 & 29.51 & 9.93 & 24.66 & 11.90 & 22.64 & 11.98 & 23.44 & 9.27 & 23.88 & 11.61 & 25.41 \\
 \rowcolor{mygray}
 & \textbf{T.R.} & 0.83 & 0.89 & 0.94 & 0.90 & 0.62 & 0.74 & 0.89 & 0.76 & 1.09 & 1.00 & 0.97 & 1.06 & 0.87 & 0.88 \\ \hline
\multirow{3}{*}{\textbf{DistMult}~\cite{DistMult}} & \textbf{EN} & 13.66 & 29.77 & 17.46 & 33.19 & 11.63 & 26.63 & 14.63 & 25.91 & 9.97 & 22.92 & 9.08 & 20.44 & 12.74 & 26.48 \\
 & \textbf{FR} & 12.58 & 28.73 & 16.03 & 31.81 & 12.12 & 27.76 & 11.64 & 22.97 & 9.01 & 23.77 & 10.13 & 21.07 & 11.92 & 26.02 \\
 \rowcolor{mygray}
 & \textbf{T.R.} & 0.92 & 0.92 & 1.03 & 0.95 & 0.81 & 0.86 & 0.93 & 0.78 & 0.92 & 1.00 & 1.03 & 0.96 & 0.94 & 0.91 \\ \hline
\multirow{3}{*}{\textbf{RotatE}~\cite{RotatE}} & \textbf{EN} & 12.99 & 28.89 & 19.87 & 35.46 & 15.62 & 30.14 & 13.44 & 25.79 & 11.10 & 22.98 & 11.37 & 23.99 & 14.07 & 27.88 \\
 & \textbf{FR} & 13.01 & 29.33 & 17.63 & 34.81 & 14.99 & 31.04 & 11.62 & 23.17 & 10.73 & 23.14 & 11.10 & 24.66 & 13.18 & 27.69 \\
 \rowcolor{mygray}
 & \textbf{T.R.} & 0.91 & 0.91 & 1.15 & 1.03 & 1.05 & 0.97 & 0.88 & 0.78 & 1.06 & 0.98 & 1.20 & 1.12 & 1.03 & 0.96 \\ \hline \multicolumn{16}{c}{ \textbf{Temporal KG embedding methods}} \\ \hline
\multirow{3}{*}{\textbf{TA-DistMult}~\cite{TA-DistMult}} & \textbf{EN} & 15.83 & 34.77 & 18.99 & 37.46 & 14.98 & 29.99 & 14.97 & 30.01 & 9.02 & 21.10 & 8.74 & 17.76 & 13.75 & 28.51 \\
 & \textbf{FR} & 16.61 & 35.83 & 17.81 & 37.96 & 15.58 & 31.21 & 13.21 & 28.58 & 9.63 & 22.91 & 9.03 & 18.83 & 13.65 & 29.22 \\
  \rowcolor{mygray}
 & \textbf{T.R.} & 1.13 & 1.11 & 1.13 & 1.10 & 1.05 & 0.97 & 0.99 & 0.94 & 0.91 & 0.94 & 0.95 & 0.85 & 1.04 & 1.00 \\ \hline
\multirow{3}{*}{\textbf{RE-Net}~\cite{Renet}} & \textbf{EN} & 17.58 & 37.97 & 19.03 & 39.46 & 15.88 & 33.69 & 15.03 & 34.77 & 12.01 & 25.72 & 11.07 & 25.64 & 15.10 & 32.88 \\
 & \textbf{FR} & 17.01 & 36.79 & 18.32 & 38.07 & 15.47 & 34.83 & 15.63 & 33.86 & 12.31 & 25.03 & 11.79 & 24.97 & 15.09 & 32.26 \\
  \rowcolor{mygray}
 & \textbf{T.R.} & 1.21 & 1.17 & 1.14 & 1.13 & 1.07 & 1.08 & 1.08 & 1.10 & 1.18 & 1.08 & 1.23 & 1.17 & 1.15 & 1.12 \\ \hline
\multirow{3}{*}{\textbf{RE-GCN}~\cite{RE-GCN}} & \textbf{EN} & 16.88 & 36.54 & 19.84 & 40.17 & 16.17 & 34.84 & 15.99 & 35.62 & 12.22 & 26.02 & 10.63 & 23.38 & 15.29 & 32.76 \\
 & \textbf{FR} & 17.14 & 37.01 & 19.63 & 41.01 & 16.44 & 35.61 & 15.03 & 33.19 & 11.91 & 25.13 & 11.09 & 22.77 & 15.21 & 32.45 \\ 
  \rowcolor{mygray}
 & \textbf{T.R.} & 1.19 & 1.15 & 1.21 & 1.19 & 1.12 & 1.11 & 1.09 & 1.10 & 1.17 & 1.09 & 1.16 & 1.07 & 1.16 & 1.12 \\ \hline 
 \multicolumn{16}{c}{ \textbf{Multilingual KG embedding methods}} \\ \hline
\multirow{3}{*}{\textbf{KEnS}~\cite{KEnS}} & \textbf{EN} & 15.98 & 33.91 & 17.33 & 37.62 & 14.41 & 31.44 & 14.47 & 29.61 & 12.88 & 26.77 & 11.03 & 24.99 & 14.35 & 30.72 \\
 & \textbf{FR} & 17.02 & 34.07 & 16.61 & 37.99 & 15.57 & 33.82 & 13.62 & 30.24 & 12.03 & 24.32 & 10.51 & 23.86 & 14.23 & 30.72 \\
  \rowcolor{mygray}
 & \textbf{T.R.} & 1.15 & 1.07 & 1.04 & 1.11 & 1.03 & 1.03 & 0.99 & 0.96 & 1.21 & 1.09 & 1.15 & 1.13 & 1.09 & 1.06 \\ \hline
\multirow{3}{*}{\textbf{AlignKGC}~\cite{AlignKGC}} & \textbf{EN} & 13.59 & 33.19 & 16.44 & 33.14 & 13.71 & 34.07 & 12.13 & 31.07 & 11.33 & 26.63 & 8.32 & 20.77 & 12.59 & 29.81 \\
 & \textbf{FR} & 13.90 & 34.71 & 17.14 & 34.81 & 14.97 & 33.65 & 12.07 & 30.44 & 10.92 & 25.31 & 9.64 & 21.28 & 13.11 & 30.03 \\
  \rowcolor{mygray}
 & \textbf{T.R.} & 0.96 & 1.07 & 1.03 & 0.99 & 0.98 & 1.07 & 0.85 & 0.98 & 1.08 & 1.11 & 0.96 & 0.97 & 0.98 & 1.03 \\ \hline
\multirow{3}{*}{\textbf{SS-AGA}~\cite{SS-AGA}} & \textbf{EN} & 15.11 & 32.19 & 16.49 & 36.14 & 14.83 & 33.31 & 12.27 & 30.68 & 12.99 & 27.03 & 11.55 & 25.07 & 13.87 & 30.74 \\
 & \textbf{FR} & 16.54 & 33.99 & 18.32 & 37.19 & 15.02 & 32.99 & 11.73 & 29.98 & 11.13 & 25.62 & 11.01 & 23.64 & 13.96 & 30.57 \\ 
  \rowcolor{mygray}
 & \textbf{T.R.} & 1.11 & 1.04 & 1.07 & 1.07 & 1.02 & 1.05 & 0.85 & 0.97 & 1.17 & 1.12 & 1.21 & 1.13 & 1.06 & 1.06 \\ \hline \midrule
\multirow{3}{*}{\textbf{\model*}} & \textbf{EN} & \textbf{19.51} & 41.55 & \textbf{22.84} & \textbf{49.30} & 17.18 & 37.62 & \textbf{18.79} & \textbf{40.01} & \textbf{14.33} & \textbf{30.13} & \textbf{13.87} & \textbf{30.30} & \textbf{17.75} & \textbf{38.15} \\
 & \textbf{FR} & 19.05 & \textbf{42.86} & 21.67 & 46.57 & \textbf{17.92} & \textbf{39.18} & 17.95 & 37.95 & 13.85 & 29.27 & 12.54 & 27.36 & 17.16 & 37.20 \\
  \rowcolor{mygray}
 & \textbf{T.R.} & 1.35 & 1.33 & 1.36 & 1.40 & 1.20 & 1.21 & 1.29 & 1.25 & 1.37 & 1.27 & 1.42 & 1.33 & 1.33 & 1.3 \\ \hline
\rowcolor{LightCyan}  \textbf{Gains} & & 11\% &	13\%&	15\%&	20\%&	9\%&	10\%&	18\%&	12\%&	10\%&	11\%&	18\%&	18\%&	16\%&	16\% \\
 \bottomrule
\end{tabular}}
\end{table*}

We evaluate \model on EventKG data~\cite{EventKG} including 2 source languages and 6 target languages, and we aim to answer the following research questions:
\begin{itemize}[leftmargin = 15pt]
    \item \textbf{RQ1}: How does \model perform compared with state-of-the-art models on the low-resource target languages?
    \item \textbf{RQ2}: How do reliability of alignment information (with various noise ratio) affect model performances?
    \item \textbf{RQ3}: How do each component and important parameters affect \model performance?
\end{itemize}

\subsection{Datasets} 
We evaluate \model by 12 cross-lingual TKG transfer tasks on EventKG data~\cite{EventKG}, which is a multilingual TKG including 2 source languages and 6 target languages. For each language, we collect events during 1980 to 2022 and split the time span into 40 time steps for training, validation and testing (28/4/8). Table~\ref{tb:data} shows the dataset statistics. We describe the dataset details in Appendix~\ref{ap:data}.

\subsection{Experimental Setup}

\noindent \textbf{Baselines}.
We compare ten state-of-the-art baselines from three related areas. We describe the baseline details in Appendix~\ref{ap:baseline}.
\begin{itemize}[leftmargin = 15pt]
    \item Static KG embedding methods: \textbf{TransE}~\cite{TransE}, \textbf{TransR}~\cite{TransR}, \textbf{DistMult}~\cite{DistMult}, \textbf{RotatE}~\cite{RotatE};
    \item Temporal KG embedding methods: \textbf{TA-DistMult}~\cite{TA-DistMult}, \textbf{RE-NET}~\cite{Renet}, \textbf{RE-GCN}~\cite{RE-GCN}; 
    \item Multilingual KG embedding methods: \textbf{KEnS}~\cite{KEnS}; \textbf{AlignKGC}~\cite{AlignKGC}; \textbf{SS-AGA}~\cite{SS-AGA}. 
\end{itemize}

\noindent \textbf{Evaluation Protocol and Metrics}.
For each prediction $(e, r, ?, t)$ or $(?, r, e, t)$, we rank missing entities to evaluate the performance. Following~\cite{RE-GCN}, we adopt raw mean reciprocal rank ({\em MRR}) and raw Hits at 10 ({\em H@10}) as evaluation metrics. To quantitatively compare how well the transferred knowledge from the source languages can improve predictions on the low-resource languages, we adopt Transfer Ratio ({\em T.R.}) to evaluate the average improvement of each 
method over the best baseline without knowledge transferring, i.e.:
\begin{equation}
    \small
    T.R. (t_i) = \frac{1}{|S|} \sum_{s_i \in \mathcal{S}} \frac{\text{Model}(s_i \rightarrow t_i)}{\text{BestBaseline}(t_i)}
\end{equation}
\noindent
where $t_i$ denotes each target language, $s_i \in \mathcal{S}$ denotes each source language, and $\text{BestBaseline}(t_i)$ denotes the best baseline performance on the target language $t_i$ without any knowledge transferring, i.e., {\em RE-GCN w/o source}.

\noindent \textbf{Implementation}.
To simulate scarce setting, by default, we utilize $10\%$ alignments and $20\%$ events of target TKG by random selection. For static/temporal KG embedding methods, we merge source graph and target graph by adding one new type of relation (alignment). For multilingual baselines, we train them on 1-to-1 knowledge transferring (instead of the original setting) for fair comparison. We introduce implementation details of baseline models and \model in Appendix~\ref{ap:implementation}. Code and data are open-source and available at \url{https://github.com/amzn/mpkd-thewebconf-2023}.

\begin{figure}[t]
    \centering
    \includegraphics[width = 1.0 \linewidth]{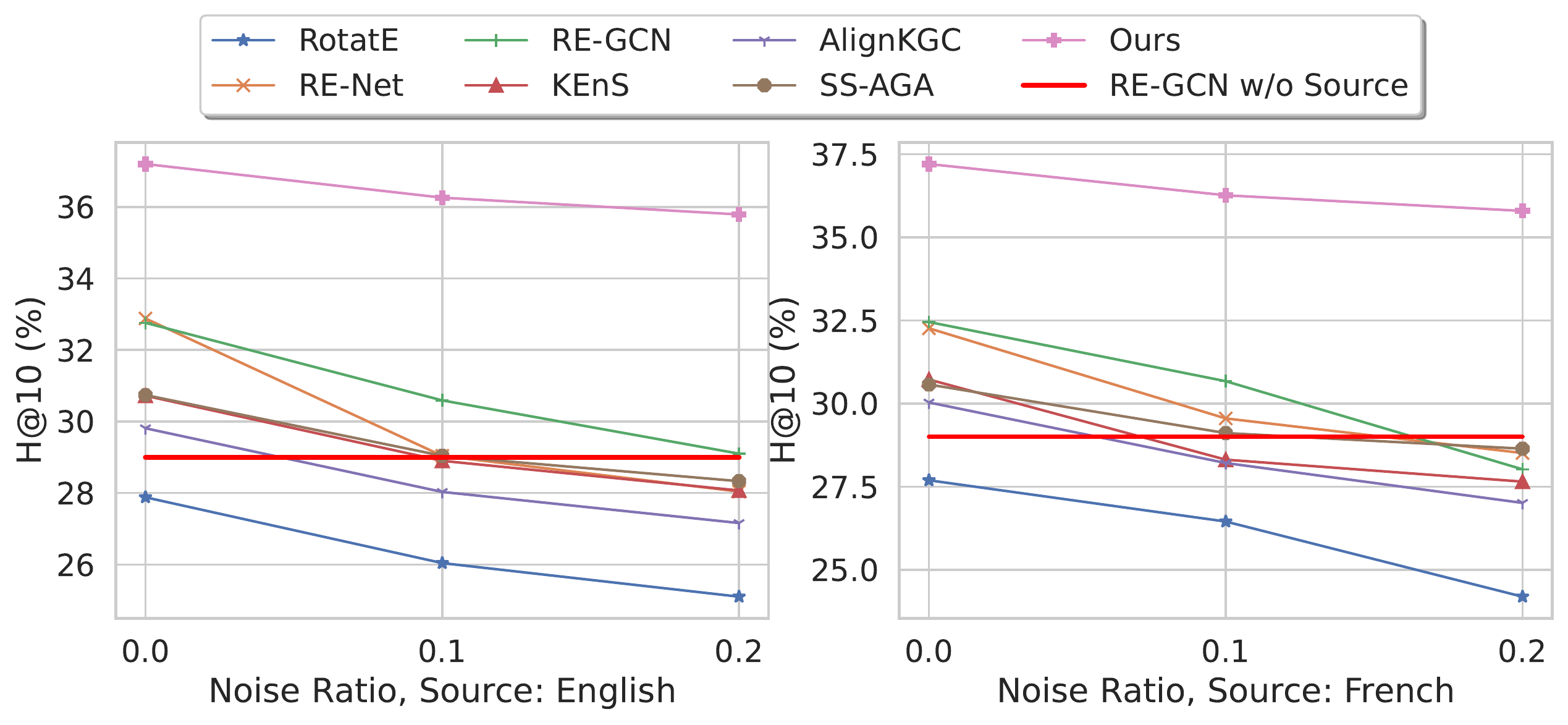}
    \caption{Experimental results under various alignment noise ratios. Average H@10 on 6 target languages are reported. \model achieves relatively robust results, with only $3.7\%$ relative drop, others have over $10\%$ drop.}
    \label{fig:noise}
    \vspace{-5mm}
\end{figure}

\subsection{Experiments on Cross-lingual Reasoning (RQ1)}
We first evaluate the model performance with incomplete cross-lingual alignments, where we randomly preserve $10\%$ alignments of the target entities for distilling knowledge. Table~\ref{tb:clean} reports the overall results for the cross-lingual experiments. By utilizing only $10\%$ cross-lingual alignments, \model achieves $33\%$ ({\em MRR}) and $30\%$ ({\em H@10}) relative improvement over best baseline without the knowledge transferring ({\em RE-GCN w/o source}) on average, demonstrating the effectiveness of \model in modeling alignments for knowledge transferring. Compared with ten baselines using alignments, \model still achieves  relative $14\%$ relative improvements over the second best results. Specifically, we have the following observations:
\begin{itemize}[leftmargin = 5pt]
\item Static baseline ({\em TransE, TransR, DistMult, RotatE}) fail to beat {\em RE-GCN w/o source}, although using alignments, due the insufficient modeling of temporal information. Similarly, multilingual methods ({\em KEnS, AlignKGC, SS-AGA}) also produce unsatisfying results; 

\item All temporal baselines ({\em TA-DistMult, RE-Net, RE-GCN}) manage to beat {\em RE-GCN w/o source}, as the modeling of both temporal evolution and cross-lingual alignment can facilitate the representation learning of target entities. But the improvements are marginal compared with our model, as the effect of knowledge distillation is constrained by the limited amount of cross-lingual alignments; 

\item Our model consistently achieves the best performance. Through $10\%$ alignments, \model can progressively transfer temporal knowledge and generate pseudo alignments with high confidence to boost the effect and range of the knowledge distillation;

\item We also notice the uneven improvements across languages, (e.g., ~$40\%$ improvements for German, ~$20\%$ for Italian). We hypothesize it is because of various language dependencies with source languages.
\end{itemize}


\subsection{Experiments under Alignment Noises (RQ2)}
In reality, cross-lingual alignments can be obtained by human labeling or rule-based inference modules, which may introduce indispensable noises. We evaluate how the reliability of alignment information affects baseline models and \model. In this experiment, we still utilize $10\%$ alignments. To simulate unreliable alignments, we select a subset of alignments (measured by {\em Noise Ratio}) and randomly change the aligned target entity to another entity without alignment information.

\begin{table}[t]
\caption{Ablation Studies.}
\label{tb:ablation}
\small
\centering
\resizebox{0.9\linewidth}{!}{
    \fontsize{8.5}{11}\selectfont
\begin{tabular}{c|c|cccccc}
\toprule
\textbf{Ablations} & \textbf{Target} & \multicolumn{2}{c}{\textbf{ES}} & \multicolumn{2}{c}{\textbf{SL}} & \multicolumn{2}{c}{\textbf{Avg.}} \\ \hline
\textbf{} & \textbf{Source} & \textbf{MRR} & \textbf{H@10} & \textbf{MRR} & \textbf{H@10} & \textbf{MRR} & \textbf{H@10} \\ \hline
\multirow{3}{*}{\textbf{\begin{tabular}[c]{@{}c@{}}\model w/o \\ Align. Strength \\ Control\end{tabular}}} & \textbf{EN} & 17.61 & 38.59 & 13.07 & 28.51 & 16.24 & 37.04 \\
\multicolumn{1}{l|}{} & \textbf{FR} & 17.07 & 39.46 & 13.03 & 28.14 & 16.33 & 36.61 \\
\multicolumn{1}{l|}{} &   \textbf{T.R.} & 1.21 & 1.23 & 1.27 & 1.21 & 1.21 & 1.22 \\ \hline
\multirow{3}{*}{\textbf{\begin{tabular}[c]{@{}c@{}}\model w \\ Pure Training\end{tabular}}} & \textbf{EN} & 17.09 & 37.03 & 11.89 & 26.25 & 14.81 & 31.90 \\
 & \textbf{FR} & 16.99 & 37.10 & 11.78 & 25.91 & 14.97 & 32.15 \\
 &   \textbf{T.R.} & 1.19 & 1.16 & 1.15 & 1.11 & 1.13 & 1.09 \\ \hline
\multirow{3}{*}{\textbf{\begin{tabular}[c]{@{}c@{}}\model w/o \\  Pseudo Align.\end{tabular}}} & \textbf{EN} & 17.97 & 38.04 & 12.31 & 27.83 & 15.98 & 34.83 \\
 & \textbf{FR} & 17.55 & 38.45 & 12.19 & 26.32 & 15.79 & 35.27 \\
 &  \textbf{T.R.} & 1.24 & 1.20 & 1.19 & 1.16 & 1.22 & 1.19 \\ \hline
\multirow{3}{*}{\textbf{\begin{tabular}[c]{@{}c@{}}\model w/o \\ Event Transfer\end{tabular}}} & \textbf{EN} & 18.79 & 39.03 & 13.07 & 28.07 & 16.03 & 36.74 \\
 & \textbf{FR} & 18.83 & 39.88 & 12.95 & 28.79 & 15.99 & 36.95 \\
 & \textbf{T.R.} & 1.31 & 1.24 & 1.27 & 1.21 & 1.27 & 1.24 \\ \hline
\multirow{3}{*}{\textbf{\model}} & \textbf{EN} & \textbf{19.51} & 41.55 & \textbf{14.33} & \textbf{30.13} & \textbf{17.75} & \textbf{38.15} \\
 & \textbf{FR} & 19.05 & \textbf{42.86} & 13.85 & 29.27 & 17.16 & 37.20 \\
 & \textbf{T.R.} & 1.35 & 1.33 & 1.37 & 1.27 & 1.33 & 1.3  \\
 \bottomrule
\end{tabular}}
\vspace{-5mm}
\end{table}
We vary the noise ratio from $0.0$ to $0.2$ to evaluate the models performance, as shown in Figure~\ref{fig:noise}. We report the average H@10 on 6 target languages by utilizing English TKG and French TKG respectively. As expected, with the increase of noise ratio, the performances of all compared models degrade, as the wrong alignment links mislead the knowledge transfer process. Most baselines fail to beat {\em RE-GCN w/o Source} even with $10\%$ noise, and all lose with $20\% noise$, which indicates that the quality of alignments significantly influences the model effectiveness in the cross-lingual TKG reasoning task. Notably, \model achieves relatively robust results, with only $3.7\%$ performance drop, while other strong baselines have over $10\%$ drop. This is because during the generation of pseudo alignments, \model can automatically replace those unreliable ones based on the confidence score. Also, in the alignment module, \model can assign small alignment strength to unreliable alignments.

\subsection{Model Analysis (RQ3)}
\noindent \textbf{Ablation Study}.
We evaluate performance improvements brought by the \model framework by following ablations: 
\begin{itemize}[leftmargin = 15pt]
\item {\bf \model w/o Align. Strength Control} uniformly set the alignment strength for all entities across all time steps; 
\item {\bf \model w Pure Training} optimizes the teacher-student framework without pseudo alignment generation and temporal event transfer; 
\item {\bf \model w/o Pseudo Align} eliminates the pseudo alignments generation process; 
\item {\bf \model w/o Event Transfer} eliminates the explicit transfer of temporal events.
\end{itemize}
Table~\ref{tb:ablation} reports the results measured by $H@10$. Each component leads to performance boost. \model with uniform alignment strength largely degrades performance, due to temporal knowledge discrepancy. \model without pseudo data generation achieves similar performance with temporal baselines {\em RE-Net, RE-GCN}, because of the limited amount of cross-lingual alignments. Bother generating pseudo alignments and explicitly transferring temporal events increase the performance, and combining them together in a mutually-paced procedure (in \model) can achieve the best results. 

\noindent \textbf{The Effect of Pseudo Alignments Ratio}.
To investigate the effects of the pseudo alignments on the reasoning performance, we vary the amount of pseudo alignments during training period and compare the corresponding performance measured by {\em H@10}, as shown in Figure~\ref{fig:pseudoratio}. The blue line and red line show the performances of single model on complete target TKG and single model on $20\%$ target TKG (our setting) respectively. From $0.1$, \model starts to generate and expand the initially available alignments. We observe a significant performance improvement, demonstrating the positive effects of the pseudo alignments. As expected, we find a performance decrease at $50\%$, as the added pseudo data with relatively low confidence start to introduce noise that hurt the performance.

\begin{figure}[t]
    \centering
    \begin{subfigure}[b]{0.495\linewidth}
    \centering
    \includegraphics[width = \linewidth]{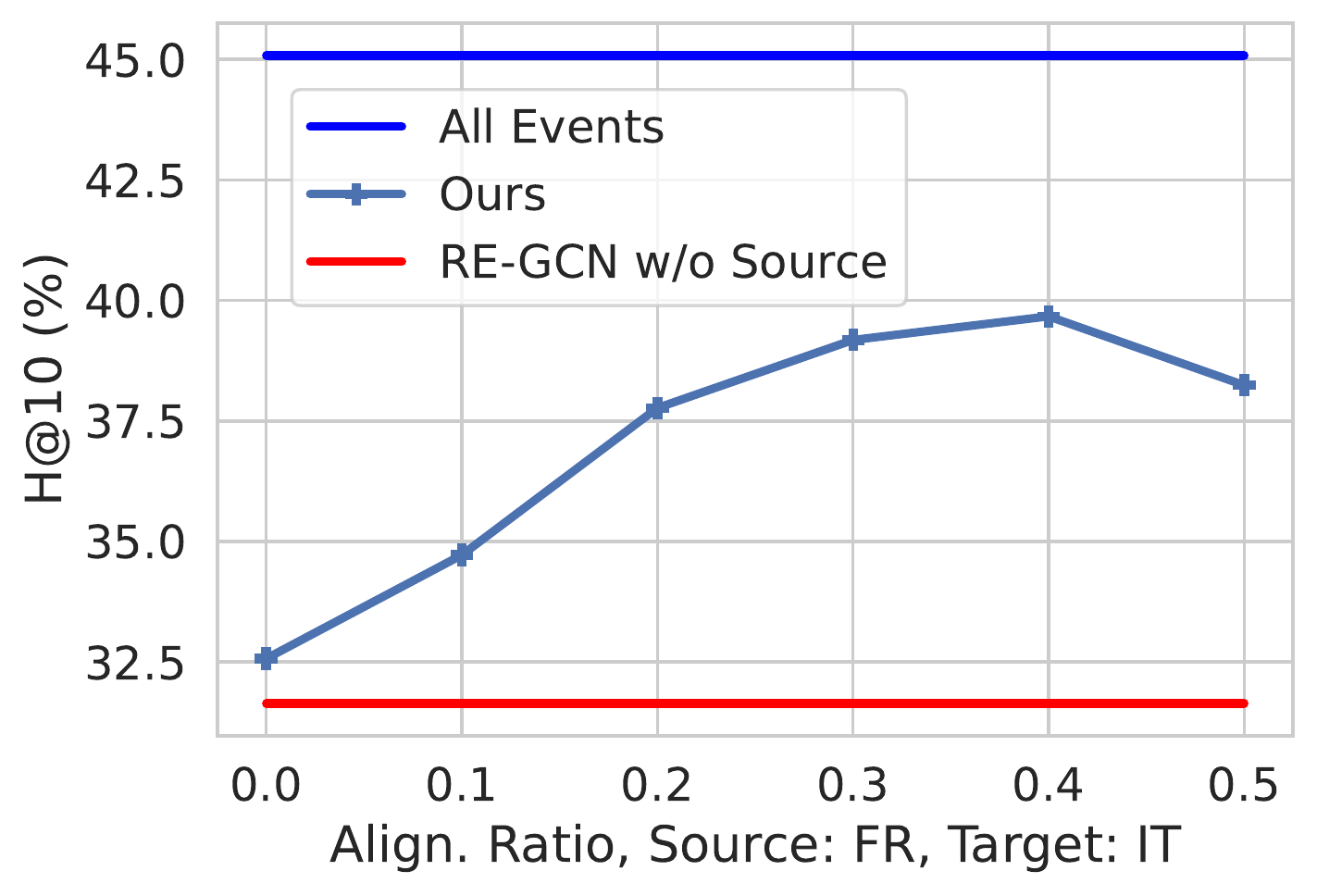}
    \caption{Pseudo align. analysis.}
    \label{fig:pseudoratio}
    \end{subfigure}
    \begin{subfigure}[b]{0.495\linewidth}
    \centering
    \includegraphics[width = \linewidth]{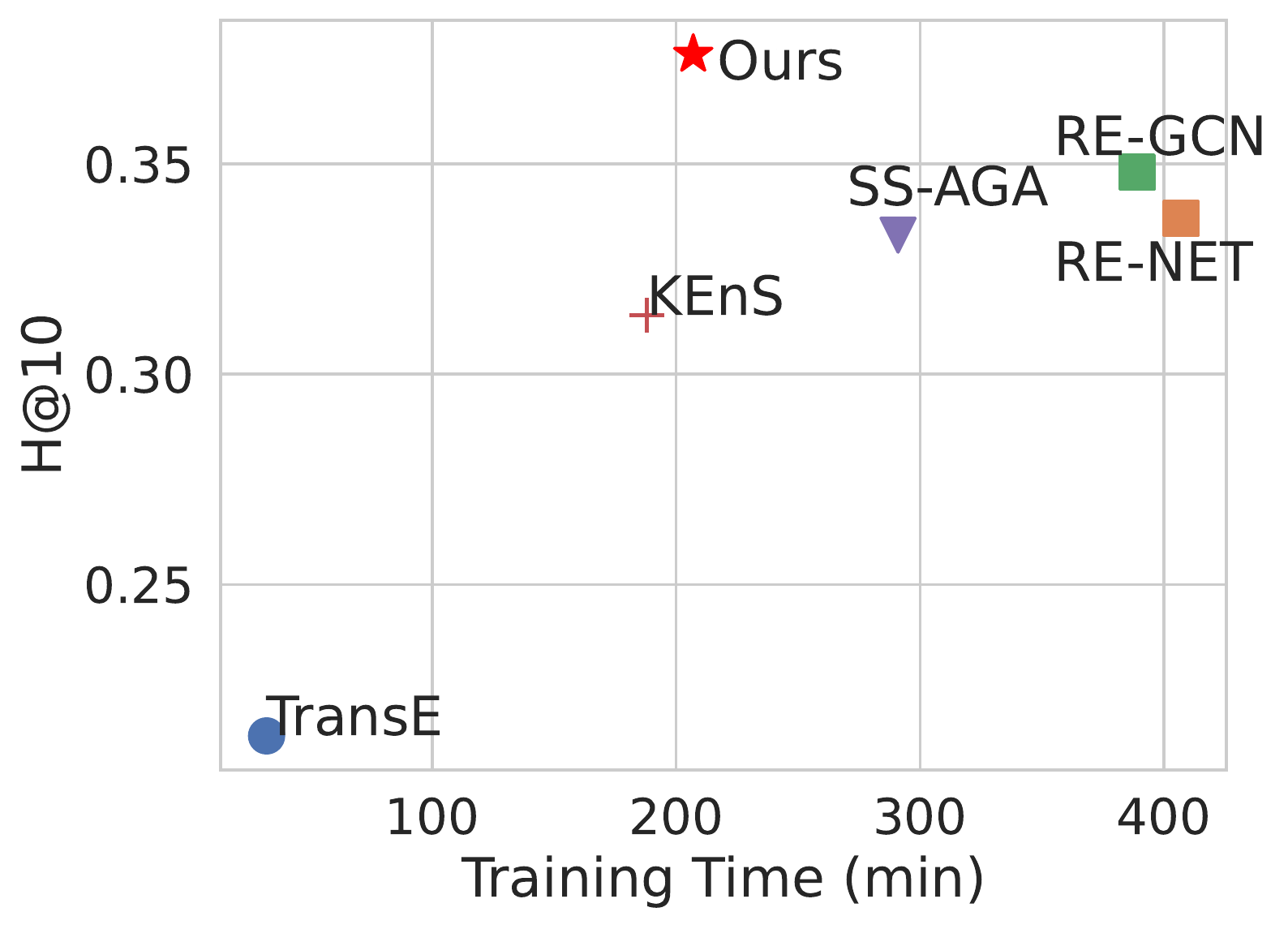}
    \caption{Efficiency analysis.}
    \label{fig:time}
    \end{subfigure}
    \caption{Efficiency analysis and pseudo alignment analysis.}
    \label{fig:analysis}
\end{figure}

\subsection{Efficiency Comparison}
To demonstrate the efficiency of \model framework, we train \model and baseline models from scratch on both target language and source language, and compare the training time. Figure~\ref{fig:time} shows that \model significantly outperforms baseline models with reasonable training time. More details are provided in Appendix~\ref{ap:time}.

\section{Related Work}

\noindent \textbf{Knowledge Graph Reasoning}.
Knowledge graph reasoning aims to predict missing facts to automatically complete KGs~\cite{DBPedia,WIKI,YAGO,acekg}. It is mostly formulated as measuring the correctness of factual samples and negative samples by specially designed score functions~\cite{TransE,RotatE,factKG,multiKG}. Recently, reasoning on temporal KGs attracts a lot of interests from the community~\cite{Know-Evolve,TA-DistMult,Renet,RE-GCN}. Compared with static KG reasoning task, the main challenge lies in how to incorporate time information. Several embedding-based methods have been proposed. They encode time-dependent information of entities and relations by decoupling embeddings into static component and time-varying component~\cite{TKGC_TE,GoelAAAI2020,metatkgr}, utilizing recurrent neural networks (RNNs) to adaptively learn the dynamic evolution from historical fact sequence~\cite{Renet,RE-GCN}, or learning a sequence of evolving representations from discrete knowledge graph snapshots~\cite{TKGC-ODE,DBKGE,Renet,RE-GCN}. However, all of the existing temporal KG reasoning models aim to extrapolate future facts based on relatively complete TKGs in high-resource languages, and how to boost reasoning performance for TKGs in low-resource languages through cross-lingual alignments is largely under-explored.

\noindent \textbf{Multilingual KG Reasoning}.
Entity alignment methods on KGs~\cite{EA1,DKGA,bright,selfKG,multi-network} can automatically enlarge the alignments by  predicting the correspondence between the two KGs. But most of them, if not all, require the relatively even completeness of two KGs to capture the structural similarities, which can not be satisfied in our case, as target TKGs are far from complete. Inspired by recent transfer learning advances~\cite{pan2009survey,li2018hierarchical,li2019exploiting,li2019sal,li2020learn,juan2022disentangle}, some recent works start to study the multilingual KG reasoning on static graphs~\cite{AlignKGC,KEnS,SS-AGA}, which aim to extract knowledge from several source KGs to boost the reasoning performance in the target KG, while they still require a sufficient amount of seed alignments and totally ignore the temporal information in our task.  We extend this line of works on TKGs, where transferring temporal knowledge is more complex.


\noindent \textbf{Self-training}.
Self-training is one of the learning strategies that addresses data scarcity issue by fully utilizing abundant unlabeled data~\cite{ST,ST1,li2021queaco}. Recent works start to study self-training strategy for graph data, as GNNs typically require large amount of data labeling~\cite{ST-GCN,CaGCN-st,Shift}. In summary, most efforts are put on node classification problem, where node labels are largely unavailable. We focus on utilizing self-training technique to deal with link scarcity (events + alignments), which is also a bottleneck for improving the performance on graphs.

\section{Conclusion}
In this paper, we studied a realistic but underexplored cross-lingual temporal knowledge graph reasoning problem, which aims at facilitating TKG reasoning in low-resource languages by distilling knowledge from a corresponding TKG in high-resource language through a small set of entity alignments as bridges. To this end, we proposed a novel mutually-paced teacher student framework, namely \model. During training, \model iteratively generates pseudo alignments to expand the cross-lingual connection, as well as transfers temporal facts to facilitate student model training in low-resource languages.  Our alignment module is learned to adjust the alignment strength for different entities at different time, thereby maximizing the benefits of knowledge transferring. We empirically validated the effectiveness of \model on 12 language pairs of EventKG data, on which the proposed framework significantly outperforms an extensive set of state-of-the-art baselines. 

\begin{acks}
The authors would like to thank the anonymous reviewers for their valuable comments and suggestions. Research reported in this paper was sponsored in part by DARPA award HR001121C0165, DARPA award HR00112290105, Basic Research Office award HQ00342110002, the Army Research Laboratory under Cooperative Agreement W911NF-17-20196, and Amazon.com Inc.
\end{acks}

\bibliographystyle{plain}
\bibliography{main}

\begin{thebibliography}{10}

\bibitem{TransE}
Antoine Bordes, Nicolas Usunier, Alberto Garcia-Duran, Jason Weston, and Oksana
  Yakhnenko.
\newblock Translating embeddings for modeling multi-relational data.
\newblock In {\em Advances in Neural Information Processing Systems}, 2013.

\bibitem{ICEWS18}
Elizabeth Boschee, Jennifer Lautenschlager, Sean O'Brien, Steve Shellman, James
  Starz, and Michael Ward.
\newblock Icews coded event data.
\newblock 2015.

\bibitem{EA1}
Muhao Chen, Yingtao Tian, Mohan Yang, and Carlo Zaniolo.
\newblock Multilingual knowledge graph embeddings for cross-lingual knowledge
  alignment.
\newblock In {\em IJCAI'17}.

\bibitem{KEnS}
Xuelu Chen, Muhao Chen, Changjun Fan, Ankith Uppunda, Yizhou Sun, and Carlo
  Zaniolo.
\newblock Multilingual knowledge graph completion via ensemble knowledge
  transfer.
\newblock In {\em EMNLP}, pages 3227--3238, 2020.

\bibitem{TGAT}
da~Xu, chuanwei ruan, evren korpeoglu, sushant kumar, and kannan achan.
\newblock Inductive representation learning on temporal graphs.
\newblock In {\em ICLR'20}, 2020.

\bibitem{TA-DistMult}
Alberto Garc{\'\i}a-Dur{\'a}n, Sebastijan Duman{\v{c}}i{\'c}, and Mathias
  Niepert.
\newblock Learning sequence encoders for temporal knowledge graph completion.
\newblock In {\em Proceedings of the 2018 Conference on Empirical Methods in
  Natural Language Processing}, 2018.

\bibitem{GoelAAAI2020}
Rishab Goel, Seyed~Mehran Kazemi, Marcus~A. Brubaker, and Pascal Poupart.
\newblock {Diachronic Embedding for Temporal Knowledge Graph Completion}.
\newblock In {\em AAAI'20}, 2020.

\bibitem{EventKG}
Simon Gottschalk and Elena Demidova.
\newblock Eventkg: the hub of event knowledge on the web and biographical
  timeline generation.
\newblock {\em Semantic Web}, 2019.

\bibitem{NCE}
M.~Gutmann and A.~Hyv\"arinen.
\newblock {N}oise-contrastive estimation: {A} new estimation principle for
  unnormalized statistical models.
\newblock In Y.W. Teh and M.~Titterington, editors, {\em Proc. Int. Conf. on
  Artificial Intelligence and Statistics (AISTATS)}, volume~9 of {\em JMLR W \&
  CP}, pages 297--304, 2010.

\bibitem{TKGC-ODE}
Zhen Han, Zifeng Ding, Yunpu Ma, Yujia Gu, and Volker Tresp.
\newblock Learning neural ordinary equations for forecasting future links on
  temporal knowledge graphs.
\newblock In {\em NeurIPS'21}, 2021.

\bibitem{ST1}
Junxian He, Jiatao Gu, Jiajun Shen, and Marc'Aurelio Ranzato.
\newblock Revisiting self-training for neural sequence generation.
\newblock {\em CoRR}, abs/1909.13788, 2019.

\bibitem{SS-AGA}
Zijie Huang, Zheng Li, Haoming Jiang, Tianyu Cao, Hanqing Lu, Bing Yin, Karthik
  Subbian, Yizhou Sun, and Wei Wang.
\newblock Multilingual knowledge graph completion with self-supervised adaptive
  graph alignment.
\newblock In {\em ACL'22}, 2022.

\bibitem{Renet}
Woojeong Jin, Meng Qu, Xisen Jin, and Xiang Ren.
\newblock Renet: Autoregressive structure inference over temporal knowledge
  graphs.
\newblock In {\em EMNLP}, 2020.

\bibitem{spl-1}
M.~Kumar, Benjamin Packer, and Daphne Koller.
\newblock Self-paced learning for latent variable models.
\newblock In J.~Lafferty, C.~Williams, J.~Shawe-Taylor, R.~Zemel, and
  A.~Culotta, editors, {\em NIPS'10}, volume~23. Curran Associates, Inc., 2010.

\bibitem{WIKI}
Julien Leblay and Melisachew~Wudage Chekol.
\newblock Deriving validity time in knowledge graph.
\newblock In {\em WWW'18}, 2018.

\bibitem{DBPedia}
Jens Lehmann, Robert Isele, Max Jakob, Anja Jentzsch, Dimitris Kontokostas,
  Pablo~N. Mendes, Sebastian Hellmann, Mohamed Morsey, Patrick van Kleef,
  S{\"o}ren Auer, and Christian Bizer.
\newblock {DBpedia} - a large-scale, multilingual knowledge base extracted from
  wikipedia.
\newblock {\em Semantic Web Journal}, 6(2):167--195, 2015.

\bibitem{belief}
Jinning Li, Huajie Shao, Dachun Sun, Ruijie Wang, Yuchen Yan, Jinyang Li,
  Shengzhong Liu, Hanghang Tong, and Tarek Abdelzaher.
\newblock Unsupervised belief representation learning with
  information-theoretic variational graph auto-encoders.
\newblock In {\em SIGIR'22}, 2022.

\bibitem{ST-GCN}
Qimai Li, Zhichao Han, and Xiao-Ming Wu.
\newblock Deeper insights into graph convolutional networks for semi-supervised
  learning.
\newblock In {\em AAAI'18}, 2018.

\bibitem{li2020learn}
Zheng Li, Mukul Kumar, William Headden, Bing Yin, Ying Wei, Yu~Zhang, and Qiang
  Yang.
\newblock Learn to cross-lingual transfer with meta graph learning across
  heterogeneous languages.
\newblock In {\em EMNLP'20}, pages 2290--2301, 2020.

\bibitem{li2019sal}
Zheng Li, Xin Li, Ying Wei, Lidong Bing, Yu~Zhang, and Qiang Yang.
\newblock Transferable end-to-end aspect-based sentiment analysis with
  selective adversarial learning.
\newblock In {\em EMNLP'19}, pages 4590--4600, Hong Kong, China, 2019.

\bibitem{li2018hierarchical}
Zheng Li, Ying Wei, Yu~Zhang, and Qiang Yang.
\newblock Hierarchical attention transfer network for cross-domain sentiment
  classification.
\newblock In {\em AAAI'18}, 2018.

\bibitem{li2019exploiting}
Zheng Li, Ying Wei, Yu~Zhang, Xiang Zhang, and Xin Li.
\newblock Exploiting coarse-to-fine task transfer for aspect-level sentiment
  classification.
\newblock In {\em AAAI'19}, 2019.

\bibitem{RE-GCN}
Zixuan Li, Xiaolong Jin, Wei Li, Saiping Guan, Jiafeng Guo, Huawei Shen,
  Yuanzhuo Wang, and Xueqi Cheng.
\newblock Temporal knowledge graph reasoning based on evolutional
  representation learning.
\newblock In {\em SIGIR}, 2021.

\bibitem{DBKGE}
Siyuan Liao, Shangsong Liang, Zaiqiao Meng, and Qiang Zhang.
\newblock Learning dynamic embeddings for temporal knowledge graphs.
\newblock In {\em WSDM '21}, 2021.

\bibitem{TransR}
Yankai Lin, Zhiyuan Liu, Maosong Sun, Yang Liu, and Xuan Zhu.
\newblock Learning entity and relation embeddings for knowledge graph
  completion.
\newblock In {\em AAAI'15}, 2015.

\bibitem{Shift}
Hongrui Liu, Binbin Hu, Xiao Wang, Chuan Shi, Zhiqiang Zhang, and Jun Zhou.
\newblock Confidence may cheat: Self-training on graph neural networks under
  distribution shift.
\newblock {\em CoRR}, abs/2201.11349, 2022.

\bibitem{factKG}
Lihui Liu, Boxin Du, Yi~Ren Fung, Heng Ji, Jiejun Xu, and Hanghang Tong.
\newblock Kompare: A knowledge graph comparative reasoning system.
\newblock In {\em KDD'21}, page 3308–3318, New York, NY, USA, 2021.
  Association for Computing Machinery.

\bibitem{multiKG}
Lihui Liu, Boxin Du, Jiejun Xu, Yinglong Xia, and Hanghang Tong.
\newblock Joint knowledge graph completion and question answering.
\newblock In {\em KDD '22}, page 1098–1108, New York, NY, USA, 2022.
  Association for Computing Machinery.

\bibitem{selfKG}
Xiao Liu, Haoyun Hong, Xinghao Wang, Zeyi Chen, Evgeny Kharlamov, Yuxiao Dong,
  and Jie Tang.
\newblock Selfkg: Self-supervised entity alignment in knowledge graphs.
\newblock {\em Proceedings of the ACM Web Conference 2022}, 2022.

\bibitem{YAGO}
Farzaneh Mahdisoltani, Joanna Biega, and Fabian~M. Suchanek.
\newblock Yago3: A knowledge base from multilingual wikipedias.
\newblock In {\em CIDR}, 2015.

\bibitem{pan2009survey}
Sinno~Jialin Pan and Qiang Yang.
\newblock A survey on transfer learning.
\newblock {\em IEEE Transactions on knowledge and data engineering},
  22(10):1345--1359, 2009.

\bibitem{KG4Social}
Jianwei Qian, Xiang-Yang Li, Chunhong Zhang, Linlin Chen, Taeho Jung, and Junze
  Han.
\newblock Social network de-anonymization and privacy inference with knowledge
  graph model.
\newblock {\em IEEE Transactions on Dependable and Secure Computing}, pages
  679--692, 2019.

\bibitem{KBQA}
Apoorv Saxena, Soumen Chakrabarti, and Partha Talukdar.
\newblock Question answering over temporal knowledge graphs.
\newblock In {\em ACL'21}, August 2021.

\bibitem{ST}
H.~J. Scudder.
\newblock Probability of error of some adaptive pattern-recognition machines.
\newblock {\em IEEE Trans. Inf. Theory}, 11:363--371, 1965.

\bibitem{misinfo}
Huajie Shao, Shuochao Yao, Andong Jing, Shengzhong Liu, Dongxin Liu, Tianshi
  Wang, Jinyang Li, Chaoqi Yang, Ruijie Wang, and Tarek Abdelzaher.
\newblock Misinformation detection and adversarial attack cost analysis in
  directional social networks.
\newblock In {\em ICCCN}, pages 1--11, 2020.

\bibitem{AlignKGC}
Harkanwar Singh, Soumen Chakrabarti, PRACHI JAIN, Sharod~Roy Choudhury, and
  Mausam.
\newblock Multilingual knowledge graph completion with joint relation and
  entity alignment.
\newblock In {\em Conference on Automated Knowledge Base Construction}, 2021.

\bibitem{bootEA}
Zequn Sun, Wei Hu, Qingheng Zhang, and Yuzhong Qu.
\newblock Bootstrapping entity alignment with knowledge graph embedding.
\newblock In {\em IJCAI'18}, page 4396–4402, 2018.

\bibitem{RotatE}
Zhiqing Sun, Zhi-Hong Deng, Jian-Yun Nie, and Jian Tang.
\newblock Rotate: Knowledge graph embedding by relational rotation in complex
  space.
\newblock In {\em International Conference on Learning Representations}, 2019.

\bibitem{Know-Evolve}
Rakshit Trivedi, Hanjun Dai, Yichen Wang, and Le~Song.
\newblock Know-evolve: Deep temporal reasoning for dynamic knowledge graphs.
\newblock In {\em ICML'17}.

\bibitem{andgan}
Haiwen Wang, Ruijie Wang, Chuan Wen, Shuhao Li, Yuting Jia, Weinan Zhang, and
  Xinbing Wang.
\newblock Author name disambiguation on heterogeneous information network with
  adversarial representation learning.
\newblock In {\em AAAI '20}, 2020.

\bibitem{RippleNet}
Hongwei Wang, Fuzheng Zhang, Jialin Wang, Miao Zhao, Wenjie Li, Xing Xie, and
  Minyi Guo.
\newblock Ripplenet: Propagating user preferences on the knowledge graph for
  recommender systems.
\newblock In {\em CIKM '18}, 2018.

\bibitem{DyDiff-VAE}
Ruijie Wang, Zijie Huang, Shengzhong Liu, Huajie Shao, Dongxin Liu, Jinyang Li,
  Tianshi Wang, Dachun Sun, Shuochao Yao, and Tarek Abdelzaher.
\newblock Dydiff-vae: A dynamic variational framework for information diffusion
  prediction.
\newblock In {\em Proceedings of the 44th International ACM SIGIR Conference on
  Research and Development in Information Retrieval}, SIGIR '21, 2021.

\bibitem{RETE}
Ruijie Wang, Zheng Li, Danqing Zhang, Qingyu Yin, Tong Zhao, Bing Yin, and
  Tarek Abdelzaher.
\newblock Rete: Retrieval-enhanced temporal event forecasting on unified query
  product evolutionary graph.
\newblock In {\em Proceedings of the ACM Web Conference 2022}, WWW '22, page
  462–472, 2022.

\bibitem{acekg}
Ruijie Wang, Yuchen Yan, Jialu Wang, Yuting Jia, Ye~Zhang, Weinan Zhang, and
  Xinbing Wang.
\newblock Acekg: A large-scale knowledge graph for academic data mining.
\newblock In {\em Proceedings of the 27th ACM International Conference on
  Information and Knowledge Management}, CIKM '18, 2018.

\bibitem{metatkgr}
Ruijie Wang, zheng li, Dachun Sun, Shengzhong Liu, Jinning Li, Bing Yin, and
  Tarek Abdelzaher.
\newblock Learning to sample and aggregate: Few-shot reasoning over temporal
  knowledge graphs.
\newblock In {\em Advances in Neural Information Processing Systems}, 2022.

\bibitem{proof}
Tongzhou Wang and Phillip Isola.
\newblock Understanding contrastive representation learning through alignment
  and uniformity on the hypersphere.
\newblock {\em CoRR}, abs/2005.10242, 2020.

\bibitem{CaGCN-st}
Xiao Wang, Hongrui Liu, Chuan Shi, and Cheng Yang.
\newblock Be confident! towards trustworthy graph neural networks via
  confidence calibration.
\newblock In {\em NeurIPS'21}.

\bibitem{TKGC_TE}
Chenjin Xu, Mojtaba Nayyeri, Fouad Alkhoury, Hamed Yazdi, and Jens Lehmann.
\newblock Temporal knowledge graph completion based on time series gaussian
  embedding.
\newblock In {\em The Semantic Web – ISWC 2020: 19th International Semantic
  Web Conference, Athens, Greece, November 2–6, 2020, Proceedings, Part I},
  page 654–671, 2020.

\bibitem{spl-2}
Miao Xu, Bingcong Li, Gang Niu, Bo~Han, and Masashi Sugiyama.
\newblock Revisiting sample selection approach to positive-unlabeled learning:
  Turning unlabeled data into positive rather than negative.
\newblock {\em CoRR}, abs/1901.10155, 2019.

\bibitem{DKGA}
Yuchen Yan, Lihui Liu, Yikun Ban, Baoyu Jing, and Hanghang Tong.
\newblock Dynamic knowledge graph alignment.
\newblock In {\em AAAI}, volume~35, 2021.

\bibitem{bright}
Yuchen Yan, Si~Zhang, and Hanghang Tong.
\newblock Bright: A bridging algorithm for network alignment.
\newblock In {\em Proceedings of the Web Conference 2021}, 2021.

\bibitem{multi-network}
Yuchen Yan, Qinghai Zhou, Jinning Li, Tarek Abdelzaher, and Hanghang Tong.
\newblock Dissecting cross-layer dependency inference on multi-layered
  inter-dependent networks.
\newblock In {\em CIKM'22}, 2022.

\bibitem{DistMult}
Bishan Yang, Wen-tau Yih, Xiaodong He, Jianfeng Gao, and li~Deng.
\newblock Embedding entities and relations for learning and inference in
  knowledge bases.
\newblock In {\em ICLR'14}.

\bibitem{polarization}
Chaoqi Yang, Jinyang Li, Ruijie Wang, Shuochao Yao, Huajie Shao, Dongxin Liu,
  Shengzhong Liu, Tianshi Wang, and Tarek Abdelzaher.
\newblock Hierarchical overlapping belief estimation by structured matrix
  factorization.
\newblock In {\em ASONAM'20}, 2020.

\bibitem{juan2022disentangle}
Juan Zha, Zheng Li, Ying Wei, and Yu~Zhang.
\newblock Disentangling task relations for few-shot text classification via
  self-supervised hierarchical task clustering.
\newblock In {\em EMNLP'22}, 2022.

\bibitem{li2021queaco}
Danqing Zhang, Zheng Li, Tianyu Cao, Chen Luo, Tony Wu, Hanqing Lu, Yiwei Song,
  Bing Yin, Tuo Zhao, and Qiang Yang.
\newblock Queaco: Borrowing treasures from weakly-labeled behavior data for
  query attribute value extraction.
\newblock In {\em CIKM'21}.

\bibitem{TKG4Rec2}
Yuyue Zhao, Xiang Wang, Jiawei Chen, Yashen Wang, Wei Tang, Xiangnan He, and
  Haiyong Xie.
\newblock Time-aware path reasoning on knowledge graph for recommendation.
\newblock {\em ACM Trans. Inf. Syst.}, 2022.

\bibitem{TKG4Rec}
Kun Zhou, Wayne~Xin Zhao, Shuqing Bian, Yuanhang Zhou, Ji-Rong Wen, and
  Jingsong Yu.
\newblock Improving conversational recommender systems via knowledge graph
  based semantic fusion.
\newblock In {\em KDD'20}, page 1006–1014, 2020.

\end{thebibliography}
\newpage
\appendix
\balance
\section{Appendix}

\subsection{Model Description}
\label{ap:attn}
In this section, we introduce the temporal attention layer and cross-lingual attention layer for entity alignments utilized in Section~\ref{sec:align}. We first introduce the general attention mechanism we utilized, then specify the two layers respectively.

Given two representation sequence from temporal domain: key sequence $\boldsymbol{H}_K = \{\boldsymbol{h}_K^{1}, \boldsymbol{h}_K^{2}, \cdots, \boldsymbol{h}_K^{T}\}$ and query sequence $\boldsymbol{H}_Q = \{\boldsymbol{h}_Q^{1}, \boldsymbol{h}_Q^{2}, \cdots, \boldsymbol{h}_Q^{T}\}$  from all time steps, we propose the following attention to calculate the pairwise importance:
\begin{equation}
    \small
    \mathbf{\beta} = \text{Attn}(key = \boldsymbol{H}_K, query = \boldsymbol{H}_Q) = \operatorname{softmax}\left(\frac{\boldsymbol{H}_Q \boldsymbol{W}^Q(\boldsymbol{H}_K \boldsymbol{W}^K)^T}{\sqrt{d}} + \boldsymbol{M}\right),
\end{equation}
\noindent
where $\boldsymbol{W}^Q$, $\boldsymbol{W}^L$ are trainable temporal parameters, $\mathbf{\beta}$ is learned temporal weight indicating pairwise importance, $d$ denotes dimension of input representations, and $\boldsymbol{M}$ is added to ensure auto-regressive setting, i.e., preventing future information affecting current state. We define $\boldsymbol{M}_{ij}=0$ if $i\leq j$, otherwise $-\infty$.

For {\em temporal attention} layer, we use $\boldsymbol{h}_e = \{\boldsymbol{h}_e(1), \boldsymbol{h}_e(2), \cdots, \boldsymbol{h}_e(T)\}$ for both query and key sequence to obtain the temporal attention weights $\beta$:
\begin{equation}
    \small
    \mathbf{\beta} = \text{Attn}(key = \boldsymbol{h}_e, query = \boldsymbol{h}_e),
\end{equation}
\noindent
then the desired $\boldsymbol{H}_e(t)$ is leaned as the combination of input sequence, where $\boldsymbol{W}^V$ is a trainable matrix.:
\begin{equation}
    \begin{aligned}
    \small
    \boldsymbol{H}_e(t) &= \text{Temporal-Attn}(\boldsymbol{h}_e(1), \cdots, \boldsymbol{h}_e(t)) \\
    &= \sum_{i = 1}^{t} \beta_{it} \boldsymbol{h}_e(i)\boldsymbol{W}^V
    \end{aligned}
\end{equation}

For {\em cross-lingual attention} layer, we use $\boldsymbol{H}^s_e = \{\boldsymbol{H}^s_e(1), \cdots, \boldsymbol{H}^s_e(T)\}$ in source language as query sequence and $\boldsymbol{H}^t_e = \{\boldsymbol{H}^t_e(1), \cdots, \boldsymbol{H}^t_e(T)\}$ in target language as key sequence to obtain the attention weights $\beta$:
\begin{equation}
    \small
    \mathbf{\beta}_{e,t} = \text{Attn}(key = \boldsymbol{H}^t_e, query = \boldsymbol{H}^s_e)_{tt},
\end{equation}
\noindent
where $\mathbf{\beta}_{e,t}$ is trainable weight to adjust the alignment strength of different entities at different time.

\subsection{Theorem Proof}
\label{ap:proof}
\begin{theorem}
Let $N$ denote the number of negative samples for optimization, $\epsilon$ denotes the ratio of correct pseudo data, $\beta$ denotes the ratio of pseudo data amount to the initial groundtruth data amount. As the number of negative samples $N \rightarrow \infty$, the $\mathcal{L}_{s\rightarrow t}^{ST}$ converges to its limit with an absolute deviation decaying in $O(\frac{1+\epsilon}{1+\beta}\cdot N ^{-2/3})$.
\end{theorem}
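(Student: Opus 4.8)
The plan is to analyze the deviation of the empirical loss $\mathcal{L}_{s\rightarrow t}^{ST}$ from its limiting (population) value as a function of the number of negative samples $N$, and then track how the four convex combination coefficients in Eq.~\eqref{eq:stloss} rescale that deviation. First I would isolate the role of negative sampling: each of the four constituent losses ($\mathcal{L}_{\Tilde{\mathcal{G}}_t}$, $\mathcal{L}_{\Tilde{\mathcal{G}}_t^{ST}}$, $\mathcal{L}_{\Tilde{\Gamma}_{s\leftrightarrow t}}$, $\mathcal{L}_{\Tilde{\Gamma}_{s\leftrightarrow t}^{ST}}$) is a margin-based hinge expectation of the form $\mathbb{E}[\max(0, \lambda - f^{+} + f^{-})]$ where, in practice, the inner expectation over the negative entity $e^{-}$ is replaced by an empirical average (or a max) over $N$ sampled negatives. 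The key observation is that the approximation error incurred by using $N$ negatives in a ranking-style objective of this kind decays at the rate $N^{-2/3}$; I would establish this by a standard argument bounding the gap between the true expectation over negatives and the order statistic / sample-average estimator, using boundedness of the score function (TransE scores and cosine similarities are bounded) together with a bias-variance decomposition, and optimizing the resulting trade-off to get the $N^{-2/3}$ exponent. This gives that each term deviates from its limit by $O(N^{-2/3})$ up to a constant.

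Next I would propagate these per-term bounds through the weighted sum. Writing $|\Tilde{\mathcal{G}}_t^{ST}| = \beta |\Tilde{\mathcal{G}}_t|$ and noting that the correct pseudo-data fraction is $\epsilon$ (so only an $\epsilon$-fraction of the pseudo terms behaves like genuine ground-truth signal, while the rest contributes a controllable perturbation), the ground-truth weight becomes $\tfrac{1}{1+\beta}$ and the pseudo-data weight $\tfrac{\beta}{1+\beta}$, and analogously for the alignment losses. Collecting the contributions, the ground-truth terms contribute $\tfrac{1}{1+\beta} \cdot O(N^{-2/3})$ and the (effective) pseudo terms contribute $\tfrac{\epsilon\beta}{1+\beta} \cdot O(N^{-2/3})$; summing these gives a prefactor proportional to $\tfrac{1 + \epsilon\beta}{1+\beta}$, which I would then bound by (a constant multiple of) $\tfrac{1+\epsilon}{1+\beta}$ using $0 \le \epsilon, \beta \le 1$ — more precisely $1 + \epsilon\beta \le 1 + \epsilon$ when $\beta \le 1$, and the general regime only changes the constant. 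Hence $|\mathcal{L}_{s\rightarrow t}^{ST} - \mathcal{L}_{s\rightarrow t}^{ST,\infty}| = O\!\left(\tfrac{1+\epsilon}{1+\beta} N^{-2/3}\right)$, which is the claimed bound.

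The main obstacle I anticipate is making the $N^{-2/3}$ rate rigorous for the specific sampling scheme the paper uses — in particular pinning down whether the negatives enter via an empirical mean, a soft-min, or a hard-max, since the concentration argument and the exact exponent depend on that choice; I would handle this by modeling the negative-sampling estimator as a smoothed extreme-value statistic and using a Bernstein-type inequality, choosing the smoothing scale to balance bias ($O(N^{-1/3})$-type truncation error) against stochastic fluctuation, yielding the $N^{-2/3}$ trade-off. A secondary subtlety is ensuring the constants hidden in each $O(\cdot)$ are uniform across the four losses so they can be factored out cleanly; this follows from the shared boundedness of score functions and margins $\lambda_1, \lambda_2$, which I would state explicitly as a mild regularity assumption at the start of the proof.
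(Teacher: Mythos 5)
Your decomposition of $\mathcal{L}_{s\rightarrow t}^{ST}$ into its four weighted constituents and your handling of the prefactor are sound — in fact your computation $\tfrac{1}{1+\beta}\cdot 1+\tfrac{\beta}{1+\beta}\cdot\epsilon=\tfrac{1+\epsilon\beta}{1+\beta}\le\tfrac{1+\epsilon}{1+\beta}$ is more explicit than the paper's, which essentially asserts the $\tfrac{1+\epsilon}{1+\beta}$ factor after bounding the ground-truth term by $\tfrac{\lambda}{N}e^{2/\tau}$ and the pseudo-data term by $\epsilon\tfrac{\lambda}{N}e^{2/\tau}$. However, the heart of the theorem — why the deviation decays as $N^{-2/3}$ at all — is missing from your proposal, and the route you sketch for it would not deliver that exponent. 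You propose to work directly with the hinge loss $\mathbb{E}[\max(0,\lambda-f^{+}+f^{-})]$ and extract $N^{-2/3}$ from a bias--variance trade-off on an ``order statistic / sample-average estimator''; but an empirical average over $N$ negatives concentrates at rate $N^{-1/2}$, a sum grows linearly in $N$, and your own accounting (a truncation bias of order $N^{-1/3}$ balanced against fluctuation) would give a total error no better than $N^{-1/3}$, not $N^{-2/3}$. The exponent is not a generic property of ranking losses; it is specific to the log-sum-exp structure of the NCE denominator.

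The paper's essential first move, which you skip, is to rewrite both the reasoning loss and the alignment loss in Noise Contrastive Estimation form, $\mathcal{L}=\mathbb{E}\bigl[-\log\bigl(e^{f/\tau}/(e^{f/\tau}+\sum_{-}e^{f^{-}/\tau})\bigr)\bigr]$, subtract the divergent $\log N$ normalization (your proposal never addresses the fact that the loss with $N$ negatives does not converge without this recentering), and then invoke the known two-sided concentration bound for the logarithm of an empirical mean of bounded exponentiated scores. That bound is what produces the asymmetric pair of inequalities with the $\tfrac{\lambda}{N}e^{2/\tau}$ collision term on one side and the additional $\tfrac{5}{4}N^{-2/3}e^{1/\tau}(e^{1/\tau}-e^{-1/\tau})$ term on the other; the $N^{-2/3}$ in the theorem is inherited directly from there. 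The factor $\epsilon$ likewise enters through this machinery (only the correct fraction of pseudo quadruples contributes the duplicate-collision term $\lambda/N$), rather than through your informal ``only an $\epsilon$-fraction behaves like genuine signal'' argument. To repair your proof you would need to either reproduce this NCE reformulation and the accompanying concentration lemma, or supply a genuinely different derivation of the $N^{-2/3}$ rate for the margin form — and the latter is exactly the step your sketch leaves unproven.
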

\begin{proof}
In representation learning, the margin loss has been widely adopted as the similarity metric. Without loss of generality, they can be expressed in the form of Noise Contrastive Estimation (NCE)~\cite{NCE}. Therefore, we express $\mathcal{L}_{\mathcal{G}}$ and $\mathcal{L}_{\Gamma_{s\leftrightarrow t}}$ in the form of Noise Contrastive Estimation (NCE) by introducing the negative sampling:
\begin{equation}
\scriptsize
    \mathcal{L}_{\mathcal{G}} \triangleq \mathbb{E}\left[-\log \frac{e^{f(\cdot;\Theta) / \tau}}{e^{f(\cdot;\Theta)/ \tau}+\sum_- e^{\left(f^-(\cdot;\Theta)\right) / \tau}}\right],
\end{equation}

\begin{equation}
\scriptsize
    \mathcal{L}_{\Gamma_{s\leftrightarrow t}} \triangleq \mathbb{E} \left[-\log \frac{e^{g(\cdot;\Phi)) / \tau}}{e^{g(\cdot;\Phi) / \tau}+\sum_- e^{g^-(\cdot;\Phi) / \tau}}\right],
\end{equation}
\noindent
for simplicity, $f^-(\cdot;\Theta)$ denotes the score for negative quadruple, and  $g^-(\cdot;\Phi)$ denotes score for negative alignment pair.

For our training objective $\mathcal{L}_{s \rightarrow t}^{ST}$, we show the convergence analysis of four terms one by one, then prove the overall convergence results. First of all, following~\cite{proof,selfKG}, let $N$ denote the number of negative samples per each quadruple, and we have:
\begin{equation}
\scriptsize
\begin{aligned}
&\lim _{N \rightarrow \infty}\left[\mathcal{L}_{\Tilde{\mathcal{G}}_t}-\log M\right]\\
&=-\frac{1}{\tau} \underset{(e_t, r, e_t^\prime, t) \in \Tilde{\mathcal{G}}_t}{\mathbb{E}}\left[f(\cdot;\Theta)\right] \\
&+\lim _{N \rightarrow \infty} \underset{e^-_t\in \mathcal{E}_t}{\underset{(e_t, r, e_t^\prime, t) \in \Tilde{\mathcal{G}}_t}{\mathbb{E}}}\left[\log \left(\frac{\lambda}{N} e^{f(\cdot;\Theta)/ \tau}+\frac{1}{N} \sum_- e^{f^-(\cdot;\Theta) / \tau}\right)\right]\\ 
&= -\frac{1}{\tau} \underset{(e_t, r, e_t^\prime, t) \in \Tilde{\mathcal{G}}_t}{\mathbb{E}}\left[f(\cdot;\Theta)\right] + \underset{(e_t, r, e_t^\prime, t) \in \Tilde{\mathcal{G}}_t}{\mathbb{E}} \left[\log \underset{e^-_t\in \mathcal{E}_t}{\mathbb{E}}\left[e^{f^-(\cdot;\Theta)}\right]\right],
\end{aligned}
\end{equation}
\noindent
where $\lambda$ denotes the duplicate quadruples co-existing in both incomplete $\Tilde{\mathcal{G}}_t$ and negative samples. The convergence speed is derived as follows:

For one side:
\begin{equation}
\scriptsize
    \mathcal{L}_{\Tilde{\mathcal{G}}_t}-\log N - \lim _{N \rightarrow \infty}\left[\mathcal{L}_{\Tilde{\mathcal{G}}_t}-\log N\right] \leq \frac{\lambda}{N}e^{\frac{2}{\tau}}.
\end{equation}

For another side:
\begin{equation}
    \scriptsize
    \lim _{N \rightarrow \infty}\left[\mathcal{L}_{\Tilde{\mathcal{G}}_t}-\log N\right] - \left[\mathcal{L}_{\Tilde{\mathcal{G}}_t}-\log N\right] \leq \frac{\lambda}{N}e^{2/\tau} + \frac{5}{4} N^{-\frac{2}{3}} e^{\frac{1}{\tau}}(e^{\frac{1}{\tau}}-e^{-\frac{1}{\tau}}).
\end{equation}

We then generalize the above results to the loss term on pseudo data. Suppose $\epsilon$ of the pseudo data are correct. Then we can have the following two inequality. For one side:
\begin{equation}
\scriptsize
\begin{aligned}
    &\mathcal{L}_{\Tilde{\mathcal{G}}_t^{ST}}-\log N - \lim _{N \rightarrow \infty}\left[\mathcal{L}_{\Tilde{\mathcal{G}}_t^{ST}}-\log N\right] \\
    &\leq \epsilon \underset{e\in\mathcal{E}_t}{\mathbb{E}}\left[\log \frac{\underset{e^{-}\in\mathcal{E}_t}{\mathbb{E}}\left[\frac{\lambda}{N}e^{1/\tau} + e^{f^-(\cdot;\Theta_t)/\tau}\right]}{\underset{e^{-}\in\mathcal{E}_t}{\mathbb{E}}e^{f^-(\cdot;\Theta_t)/\tau}} \right] \leq \epsilon \frac{\lambda}{N}e^{\frac{2}{\tau}}
\end{aligned}
\end{equation}

Therefore, for $\mathcal{L}_{s \rightarrow t}^{ST}$ in this side, we have:

\begin{equation}
    \scriptsize
    \mathcal{L}_{s \rightarrow t}^{ST}-\log N - \lim _{N \rightarrow \infty}\left[\mathcal{L}_{s \rightarrow t}^{ST}-\log N\right] \leq \frac{1+\epsilon}{1+\beta} \frac{\lambda}{N}e^{\frac{2}{\tau}},
\end{equation}
where $\beta$ is the ratio of pseudo data amount to groundtruth data amount during training.

Similarly, for another side, we have:
\begin{equation}
\scriptsize
\begin{aligned}
    \lim _{N \rightarrow \infty}\left[\mathcal{L}_{s \rightarrow t}^{ST}-\log N\right] &- \left[\mathcal{L}_{s \rightarrow t}^{ST}-\log N\right] \leq \frac{1+\epsilon}{1+\beta} \frac{\lambda}{N}e^{2/\tau} \\
    &+ \frac{1+\epsilon}{1+\beta} \frac{5}{4} N^{-\frac{2}{3}} e^{\frac{1}{\tau}}(e^{\frac{1}{\tau}}-e^{-\frac{1}{\tau}}).
\end{aligned}
\end{equation}

Therefore, we conclude that the $\mathcal{L}_{s \rightarrow t}^{ST}$ converges to its limit with an absolute deviation decaying in $O(\frac{1+\epsilon}{1+\beta}\cdot N ^{-2/3})$

\end{proof}

\subsection{Datasets}
\label{ap:data}
\noindent \textbf{Dataset Information}.
The commonly utilized benchmark TKGs are divided into two categories: temporal event graphs~\cite{ICEWS18} and knowledge graphs where temporally associated facts have valid periods~\cite{WIKI,YAGO,DBPedia}. In this paper, we mainly evaluate \model on the EventKG~\cite{EventKG}, which is a multilingual resource incorporating event-centric information extracted from several large-scale knowledge graphs such as Wikidata~\cite{WIKI}, DBpedia~\cite{DBPedia} and YAGO~\cite{YAGO}. Each temporal event is organized as $(e, r, e^\prime, t_s, t_e)$, where each piece of data is attached with a valid time period from start time $t_s$ to end time $t_e$. Following~\cite{Renet}, we preprocess the format such that each fact is converted to a sequence $\{(e, r, e^\prime, t_s), (e, r, e^\prime, t_s+1), \cdots, (e, r, e^\prime, t_e)\}$ from $t_s$ to $t_e$, with the minimum time unit as one step.

\noindent \textbf{Splitting Scheme}.
We collect events during 1980 to 2022, and noisy events of early years are removed. To construct multilingual TKGs, we first preserve important entities and relations by excluding infrequent ones that have less than $20$ events in each language. Then we collect the events and cross-lingual alignments.To guarantee the relation match, we only preserve relations appearing in English TKG. We split the time span into 40 equal time steps for training, validation and testing (28/4/8), where each time step roughly lasts for one year. To focus on the prediction on existing entities during training period, and eliminate the negative effects possibly caused by the randomly appearning new entities in val/test period, we only preserve entities having events during training period, following~\cite{RE-GCN}. Table~\ref{tb:data} shows the dataset statistics, including 2 source languages and 6 target languages. We purposefully choose 6 different target languages with diverse characteristics in term of the TKG size, which can evaluate \model from different data granularity. It is worth noting that to simulate the scarcity issue in target TKGs, the training quadruples presented in Table~\ref{tb:data} are randomly selected from original TKGs, with random ratio $20\%$.

\subsection{Baselines}
\label{ap:baseline}
We describe the baselines utilized in the experiments in detail:
\begin{itemize}[leftmargin = 15pt]
    \item \textbf{TransE}~\cite{TransE} is a translation-based embedding model, where both entities and relations are represented as vectors in the latent space. The relation is utilized as a translation operation between the subject and the object entity;
    \item \textbf{TransR}~\cite{TransR}  advances TransE by optimizing modeling of n-n relations, where each entity embedding can be projected to hyperplanes defined by relations;
    \item \textbf{DistMult}~\cite{DistMult} is a general framework with bilinear objective for multi-relational learning that unifies most multi-relational embedding models;
    \item \textbf{RotatE}~\cite{RotatE} represents entities as complex vectors and relations as rotation operations in a complex vector space;
    \item \textbf{TA-DistMult}~\cite{TA-DistMult} is a temporal knowledge graph reansoing method aiming at predicting missing events in history. We utilize it for predicting future events; 
    \item \textbf{RE-NET}~\cite{Renet} is a generative model to predict future facts on temporal knowledge graphs, which employs a recurrent neural network to model the entity evolution, and utilizes a neighborhood aggregator to consider the connection of facts at the same time intervals;
    \item \textbf{RE-GCN}~\cite{RE-GCN} learns the temporal representations of both entities and relations by modeling the KG sequence recurrently;
    \item \textbf{KEnS}~\cite{KEnS} starts to directly improve KGR performance on static KGs given a set of seed alignment, and proposes an ensemble-based approach for the task;
    \item \textbf{AlignKGC}~\cite{AlignKGC} jointly optimizes entity alignment loss and knowledge graph reasoning loss to improve the performance;
    \item \textbf{SS-AGA}~\cite{SS-AGA} views alignments as new edge type and employ a relation-aware GNN with learnable attention weight to model the influence of the aligned entities.
\end{itemize}

\subsection{Reproducibility}
\label{ap:implementation}
\subsubsection{Baseline Setup}
 For static knowledge graph reasoning methods, i.e., TransE, TransR, DistMult, and RotatE, we ignore all time information in quadruples, and view temporal knowledge graphs as static, cumulative ones. For static/temporal KG embedding methods, we merge source graph and target graph by adding one new type of relation (alignment), as they do not explicitly model cross-lingual entity alignment. For multilingual baselines, we train them on 1-to-1 knowledge transferring (instead of the original setting) for fair comparison. For static baselines, we utilize the static embeddings for predictions in all time steps. For fair comparisons, we keep the dimension of all embeddings as $128$, we feed pre-trained TransE embeddings on the merge graph including both source and target TKGs to those that require initial entity/relation embeddings. We tune learning rate of baselines based on {\em MRR} on validation set, and we train all baseline models and \model on same GPUs (Nvidia A100) and CPUs (Intel(R) Xeon(R) Platinum 8275CL).
 
\subsubsection{\model Setup}
We first utilize the source TKG to train the teacher representation module. Then we initialize the student module with the parameters of the teacher. During the training procedure, we first optimize the objective without generating pseudo data in the first 10 epoch. After that, we start to generate high-quality pseudo data. For the generation in each epoch, we gradually increase the amount of pseudo alignments from $10\%$ to $40\%$, and transfer all temporal events that meet the requirement. During evaluation, we tune hyperparameters based on {\em MRR} on validation set, and report the performance on the test set. Next, we report the choices of hyperparameters. For model training, we utilize Adam optimizer, and set maximum number of epochs as $50$. We set batch size as $256$, the dimension of all embeddings as $128$, and dropout rate as $0.5$. For the sake of efficiency,  we set number of temporal neighbors $b$ as $8$, and employ $1$ neighborhood aggregation layer in temporal encoder. For TKG reasoning, we set negative sampling factor as 10. For entity alignment, we set negative sampling factor as 50. For temporal generation process, We divide time span into $4$ time intervals. For model training, we mainly tune margin value $\lambda_1$, $\lambda_2$ in score functions in range $\{0.1,0.2,0.3,0.4,0.5,0.6,0.7,0.8,0.9\}$, learning rate  in range $\{0.02,0.01,0.005,0.001,0.0005\}$. 

\subsubsection{Efficiency Comparison}
\label{ap:time}
To demonstrate the efficiency of \model framework, we train \model and baseline models from scratch on both target language and source language, and compare the training time. We train all baseline models and \model on same GPUs (Nvidia A100) and CPUs (Intel(R) Xeon(R) Platinum 8275CL). Figure~\ref{fig:time} shows that \model significantly outperforms baseline models with reasonable training time. Notably, we include the pseudo data generation time. Compared with slow temporal models {\em RE-NET, RE-GCN} for knowledge graph reasoning, \model is more efficient because our temporal encoder can learn temporal entity embeddings via sampled temporal neighbors at each time without using RNNs.

\end{document}